\newcommand{\U}{\mathcal{U}}
\newcommand{\V}{\mathcal{V}}
\newtheorem{theorem}{Theorem}
\newtheorem{definition}[theorem]{Definition}
\newtheorem{observation}[theorem]{Observation}
\newtheorem{remark}[theorem]{Remark}
   \newcommand{\reals}{\mathbb{R}}
   \newcommand{\naturals}{\mathbb{N}}
   \newcommand{\Ex}{\mathbb{E}}
   \renewcommand{\Pr}{\mathbb{P}}
   \newcommand{\Lo}[1]{{\mathcal L_{#1}}}
   \newcommand{\bLo}[1]{{\mathcal L^{0/1}_{#1}}}
   \newcommand{\rLo}[2]{{\mathcal L^{#1}_{#2}}}
   \newcommand{\lo}{\ell}
   \newcommand{\blo}{\ell^{0/1}}
   \newcommand{\rlo}[1]{\ell^{#1}}
    \newcommand{\indct}[1]{\mathds{1}\left[{#1}\right]}
   \newcommand{\B}{{\mathcal B}}
   \renewcommand{\P}{{\mathcal P}}
   \newcommand{\A}{{\mathcal A}}
   \newcommand{\Ocal}{{\mathcal O}}
  \newcommand{\F}{{\mathcal F}}
  \newcommand{\G}{{\mathcal G}}
  \renewcommand{\H}{{\mathcal H}}
  \newcommand{\C}{{\mathcal C}}
  \newcommand{\Ucal}{{\mathcal U}}
  \newcommand{\Hcal}{{\mathcal H}}
  \newcommand{\Mcal}{{\mathcal M}}
  \newcommand{\vc}{\mathrm{VC}}
  \renewcommand{\d}{\mathrm{dist}}
  \newcommand{\mar}[2]{\mathrm{mar}_{#2}^{#1}}
  \newcommand{\mH}{{\mathcal H_{\mar{}{}}^{\U}}}
  \newcommand{\err}[1]{\mathrm{err}_{#1}}
  \newcommand{\iid}{i.i.d.~}
   \newcommand{\supp}{\mathrm{supp}}
\icmltitlerunning{Black-box Certification and Learning under Adversarial Perturbations}
\begin{document}

\twocolumn[
\icmltitle{Black-box Certification and Learning under Adversarial Perturbations}

\icmlsetsymbol{equal}{*}

\begin{icmlauthorlist}
\icmlauthor{Hassan Ashtiani}{equal,mac}
\icmlauthor{Vinayak Pathak}{equal,scotia}
\icmlauthor{Ruth Urner}{equal,york}

\end{icmlauthorlist}

\icmlaffiliation{mac}{Department of Computing and Software, McMaster University, Hamilton, ON, Canada}
\icmlaffiliation{scotia}{Scotiabank, Toronto, ON, Canada}
\icmlaffiliation{york}{Lassonde School of Engineering, EECS Department, York University, Toronto, ON, Canada}

\icmlcorrespondingauthor{Hassan Ashtiani}{zokaeiam@mcmaster.ca}
\icmlcorrespondingauthor{Vinayak Pathak}{vpathak@uwaterloo.ca}
\icmlcorrespondingauthor{Ruth Urner}{ruth@eecs.yorku.ca}

\icmlkeywords{Adversarial learning, certification, sample complexity, query complexity}

\vskip 0.3in
]



\printAffiliationsAndNotice{\icmlEqualContribution} 

\begin{abstract}
We formally study the problem of classification under adversarial perturbations from a learner's perspective as well as a third-party who aims at certifying the robustness of a given black-box classifier. 
We analyze a PAC-type framework of semi-supervised learning and identify possibility and impossibility results for proper learning of VC-classes in this setting.
We further introduce a new setting of black-box certification under limited query budget, and analyze this for various classes of predictors and perturbation. 
We also consider the viewpoint of a black-box adversary that aims at finding adversarial examples, showing that the existence of an adversary with polynomial query complexity can imply the existence of a sample efficient robust learner.
\end{abstract}

\section{Introduction}
\label{s:intro}

We formally study the problem of classification under \emph{adversarial perturbations}.
An adversarial perturbation is an imperceptible alteration of a classifier's input which changes its prediction.
The existence of adversarial perturbations for real-world input instances and typical classifiers~\citep{SzegedyZSBEGF13} has contributed to a lack of trust in predictive tools derived from automated learning. 
Recent years have thus seen a surge of studies proposing various heuristics to enhance robustness to adversarial attacks \citep{ChakrabortySurvey2018arxiv}.
Existing solutions often either (i) modify the learning procedure to increase the adversarial robustness, e.g. by modifying the training data or the loss function used for training~\citep{SinhaND18,CohenRK19,SalmanLRZZBY19}, or (ii) post-process an existing classifier to enhance its robustness \citep{CohenRK19}.

A user of a predictive tool, however, may not oftentimes be involved in the training of the classifier nor have the technical access or capabilities to modify its input/output behavior.
Instead, the predictor may have been provided by a third party and the user may have merely a \emph{black-box access} to the predictor. 
That is, the predictor $h$ presents itself as an oracle that takes input \emph{query} $x$ and responds with the label $h(x)$.
The provider of the predictive tool, while not necessarily assumed to have malicious intent, is still naturally  considered untrusted, and the user thus has an interest in verifying the predictor's performance (including adversarial robustness) on its own application domain. 
While the standard notion of classification accuracy can be easily estimated from an i.i.d. sample generated from user's data generating distribution, estimating the expected robust loss is not that easy:
Given a labeled instance $(x,y)$, the user can immediately verify whether the instance is misclassified ($h(x)\neq y$) using a single query to $h$, but understanding whether $x$ is vulnerable under adversarial perturbations may require many more queries to the oracle.

We introduce and analyze a formal model for \emph{black-box certification} under query access, and provide examples of hypothesis classes and perturbation types\footnote{A perturbation type captures the set of admissible perturbations the adversary is allowed to make at each point (See Section~\ref{s:setup}).} for which such a certifier exists. We further introduce the notion of \emph{witness sets} for certification, and identify more general classes of problems and perturbation types that admit black-box certification with finite queries. 
On the contrary, we demonstrate cases of simple classes where the query complexity of certification is unbounded.

We further look at the problem from the viewpoint of the adversary, connecting the query complexity of an adversary (for finding adversarial examples) and that of the certifier.
An intriguing question that we explore is whether the sample complexity of learning a robust classifier with respect to a hypothesis class is related to the query complexity of an optimal adversary (or certifier) for that class. We uncover such a connection, showing that the existence of a successful adversary with polynomial query complexity for a properly compressible class implies sample efficient robust learnability of that class. For this, we adapt a compression-based argument, demonstrating a sample complexity upper bound for robust learning that is smaller than what was previously known~\cite{MontasserHS19} (assuming that a linear adversary exists and the class is properly compressible).

We start our investigations with the problem of robustly (PAC-)learning classes of finite VC-dimension.
It has been shown recently that, while the VC-dimension characterizes the proper learnability of a hypothesis class under the binary (misclassification) loss, there are classes of small VC-dimension that are not properly learnable under the robust loss \cite{MontasserHS19}.
We define the notion of the margin class
(associated with a hypothesis class and a perturbation type) and show that, if both the class and the margin class are simple (measured by their VC-dimension), then proper learning under robust loss is not significantly more difficult than learning with respect to the binary loss.

The corresponding complexity of the margin class, however, can be potentially large for specific choices of perturbation types and hypothesis classes. We thus investigate and provide scenarios where a form of semi-supervised learning can overcome the impossibility of proper robust learning.
We believe our investigations of robust learnability in these scenarios may help shed some light on where the difficulty of general robust classification stems from.

\subsection{Related work}
\label{ss:related_work}
Recent years have produced a surge of work on adversarial attack (and defense) mechanisms \citep{madry2017towards, ChakrabortySurvey2018arxiv, chen2017zoo, dong2018boosting, narodytska2017simple, papernot2017practical, akhtar2018threat, su2019one}, as well as the development of reference implementations of these \citep{GoodfellowMP18}. 
Here, we briefly review some earlier work on theoretical understanding of the problem. 

Several recent studies have suggested and analyzed approaches of training under data augmentation \cite{SinhaND18,SalmanLRZZBY19}. The general idea is to add adversarial perturbations to data points already at training time to promote smoothness around the support of the data generating distribution. These studies then provide statistical guarantees for the robustness of the learned classifier. 
Similarly, statistical guarantees have been presented for robust training that modifies the loss function rather than the training data \citep{WongK18}.
However, the notion of robustness certification used in these is different from what we propose. While they focus on designing learning methods that are certifiably robust, we aim at certifying an \emph{arbitrary} classifier and for a potentially new distribution.

The robust learnability of finite VC-classes has been studied only recently, often with pessimistic conclusions. An early result demonstrated that there exist distributions where robust learning requires provably more data than its non-robust counterpart~\citep{schmidt2018adversarially}.
Recent works have studied adversarially robust classification in the PAC-learning framework of computational learning theory~\citep{cullina2018pac, awasthi2019robustness, montasser2020efficiently} and presented hardness results for binary distribution and hypothesis classes in this framework \citep{diochnos2018adversarial, GourdeauKK019, diochnos2019arxiv}. On the other hand, robust learning has been shown to be possible, for example when the hypothesis class is finite and the adversary has a finite number of options for corrupting the input~\citep{feige2015learning}. This result has also been extended to the more general case of classes with finite VC-dimension~\citep{attias2018improved}.
It has also been shown that robust learning is possible (by Robust Empirical Risk Minimization (RERM)) under a feasibility assumption on the distribution and bounded covering numbers of the hypothesis class \citep{BubeckLPR19}. However, more recent work has presented classes of VC-dimension $1$, where the robust loss class has arbitrarily large VC-dimension~\citep{cullina2018pac} and, moreover, where proper learning (such as RERM) is impossible in a distribution-free finite sample regime~\citep{MontasserHS19}. 
Remarkably, the latter work also presents an improper learning scheme for any VC-class and any adversary type. The sample complexity of this approach, however, depends on the dual VC-dimension which can be exponential in the VC-dimension of the class.

We note that two additional aspects of our work have appeared in the the literature before: considering robust learnability by imposing computational constraints on an adversary has been explored recently \citep{BubeckLPR19, GourdeauKK019, Garg2019arxiv}.
Earlier work has also hypothesized that unlabeled data may facilitate adversarially robust learning, and demonstrated a scenario where access to unlabeled data yields a better bound on the sample complexity under a specific data generative model \citep{CarmonRSDL19, stanforth2019labels}.

Less closely related to our work, the theory of adversarially robust learnability has been studied for non-parametric learners. A first study in that framework showed that a nearest neighbor classifier's robust loss converges to that of the Bayes optimal \citep{WangJC18}. A follow-up work then derived a characterization of the best classifier with respect to the robust loss (analogous to the notion of the Bayes optimal), and suggested a training data pruning approach for non-parametric robust classification \citep{Kamalika2019arxiv}.

\subsection{Outline and summary of contributions}
\label{ss:contributions}

{\bf Problem setup and the adversarial loss formulation.}
In Section~\ref{s:setup}, we provide the formal setup for the problem of adversarial learning. We also decompose the adversarial loss, and define the notion of the margin class associated with a hypothesis class and a perturbation type (Def.~\ref{def:margin_class}).

{\bf Using unlabeled data for adversarial learning of VC-classes.}
In Section~\ref{s:vc_classes}, we study the sample complexity of \emph{proper} robust learning. While this sample complexity can be infinite for general VC-classes~\citep{cullina2018pac, MontasserHS19, YinRB19}, we show that VC-classes are properly robustly learnable if the margin class also has finite VC-dim (Thm.~\ref{thm:learnability_pos}).
We formalize an idealized notion of semi-supervised learning where the learner has additional oracle access to probability weights of the margin sets. We show that, perhaps counter intuitively, oracle access to both (exact) margin weights and (exact) binary losses, does not suffice for identifying the minimizer of the adversarial loss in a class $\H$ (Thm.~\ref{thm:stat_impossibility}), even in the $0/1$-realizable case (Thm.~\ref{thm:double}). 
However, under the additional assumption of robust realizability,  proper learning becomes feasible with access to the marginal or sufficient unlabled data (Thms.~\ref{thm:robust_realizable} and \ref{thm:robust_realizable_unlabeled}).

{\bf Black-box certification with query access.}
We formally define the problem of \emph{black-box certification through query access} (Def.~\ref{def:certification}), and demonstrate examples where certification is possible (Obs.~\ref{obs:half-spaces}) or impossible (Obs.~\ref{obs:lower-bound}).
Motivated by this impossibility result, we also introduce a \emph{tolerant} notion of certification (Def.~\ref{def:Tolerant}). We show that while more classes are certifiable with this definition (Obs.~\ref{obs:tolerant}), some simple classes remain impossible to certify (Obs.~\ref{obs:tolerant_impossibility}).
We identify a sufficient condition for certifiability of a hypothesis class w.r.t. a perturbation type through the notion of \emph{witness sets} (Def.~\ref{def:wintness} and Thm.~\ref{thm:partial_order}).
We then consider the query complexity of the adversary (as opposed to that of the certifier) for finding adversarial instances (Def.~\ref{def:perfect_adversary}, \ref{def:efficient_adversary}, and \ref{def:non-adaptive_adversary}) and---for the case of a non-adaptive adversary---relate it to the existence of a witness set (Obs.~\ref{obs:witness_vs_non-adaptive_adversary}).
    
{\bf Connecting adversarial query complexity and PAC-learnability.}
The culminating result connects the two themes of our work: robust (PAC-)learnability, and query complexity of an adversary. With Theorem~\ref{thm:bounded_adv_impl_learnable}, we show that existence of a perfect adversary with small query complexity implies sample-efficient robust learning for properly compressible classes.

We include the proof sketches in the paper, and refer the reader to the supplementary material for detailed proofs.

\section{Setup and Definitions}\label{s:setup}
We let $X$ denote the domain (often $X\subseteq\reals^d$) and $Y$ (mostly $Y=\{0,1\}$) a (binary) label space.
We assume that data is generated by some distribution $P$ over $X\times Y$ and let $P_X$ denote the marginal of $P$ over $X$.
A \emph{hypothesis} is a function $h:X\to Y$, and can naturally be identified with a subset of $X\times Y$, namely $h = \{(x,y)\in X\times Y ~\mid~ x\in X,~y = f(x)\}$.
Since we are working with binary labels, we also sometimes identify a hypothesis $h$ with the pre-image of $1$ under $h$, that is the domain subset $\{x\in X \mid h(x) = 1\}$.
We let $\F$ denote the set of all Borel functions\footnote{For an uncountable domain, we only consider Borel-measurable hypotheses to avoid dealing with measurability issues.} from $X$ to $Y$ (or all functions in case of a countable domain). A \emph{hypothesis class} is a subset of $\F$, often denoted by $\H\subseteq \F$.

The quality of prediction of a hypothesis on a labeled example $(x,y)$ is measured by a \emph{loss function} $\lo:(\F\times X \times Y) \to \reals$.
For classification problems, the quality of prediction is typically measured with the \emph{binary} loss 
$$
\blo(h, x, y) = \indct{h(x) \neq y} 
$$
, where $\indct{\alpha}$ denotes the indicator function for predicate $\alpha$.
For (adversarially) robust classification, we let $\U:X\to 2^X$, the \emph{perturbation type}, be a function that maps each instance to the set of admissible perturbations at point $x$.
We assume that the perturbation type satisfies $x\in \U(x)$ for all $x\in X$.
If $X$ is equipped with a metric $\d$, then a natural choice for the set of perturbations at $x$ is a ball $\B_r(x) = \{z\in X ~\mid~ \d(x,z) \leq r\}$ of radius $r$ around $x$. 
For an $x\in X$ and $h\in\H$, we say that $x'\in\U(x)$ is an \emph{adversarial} point of $x$ with respect to $h$ if $h(x)\neq h(x')$.
We use the following definition of the adversarially robust loss with respect to perturbation type $\U$
$$
 \rlo{\U}(h, x, y) = \indct{\exists z\in \U(x) ~:~ h(z) \neq y}.
$$

If $\U(x)$ is always a ball of radius $r$ around $x$, we will also use the notation $\rlo{r}(h, x, y) = \rlo{\B_r}(h, x, y)$.
We assume that the perturbation type is so that $\rlo{\U}(f, \cdot, \cdot)$ is a measurable function for all $f\in \F$. A sufficient condition for this is that the set $\U(x)$ are open sets (where $X$ is assumed to be equipped with some topology) and the pertubation type further satisfies $z\in \U(x)$ if and only if $x\in \U(z)$ for all $x,z\in X$ (see Appendix 
\ref{app:measurable} for a proof and an example of a simple perturbation type that renders the the corresponding loss function of a threshold predictor non-measurable).

We denote the \emph{expected loss} (or \emph{true loss}) of a hypothesis $h$ with respect to the distribution $P$ and loss function $\lo$ by $\Lo{P} (h) = \Ex_{(x,y)\sim P} [\lo(h , x, y)]$. 
In particular, we will denote the true binary loss by $\bLo{P}(h)$ and the true robust loss by $\rLo{\U}{P}(h)$.
Further, we denote the \emph{approximation error} of class $\H$ with respect to distribution $P$ and loss function $\lo$ by
$\Lo{P}(\H) = \inf_{h\in \Hcal} \Lo{P}(h).$

The \emph{empirical loss} of a hypothesis $h$ with respect to loss function $\lo$ and a sample $S = ((x_1, y_1), \ldots, (x_n, y_n))$ is defined as $\Lo{S}(h) = \frac{1}{n}\sum_{i=1}^n \lo(h, x_i, y_i)$.

A \emph{learner} $\A$ is a function that takes in a finite sequence of labeled instances $S = ((x_1, y_1), \ldots, (x_n, y_n))$ and outputs a hypothesis $h = \A(S)$. The following is a standard notion of (PAC-)learnability from finite samples of a hypothesis class \cite{vapnikcherv71, Valiant84, blumer1989learnability, shalev2014understanding}. 

\begin{definition}[(Agnostic) Learnability]\label{def:learn}
A hypothesis class $\Hcal$ is agnostic learnable with respect to set of distributions $\P$ and loss function $\lo$, if there exists a learner $\A$ such that for all $\epsilon,\delta \in (0,1)$, there is a sample size $m(\epsilon, \delta)$ such that,  for any distribution $P\in\P$, if the input to $\A$ is an iid sample $S$ from $P$ of size $m \geq m(\epsilon, \delta)$, then, with probability at least $(1-\delta)$ over the samples, the learner outputs a hypothesis $h = \A(S)$ with
$\Lo{P}(h) \leq \Lo{P}(\H) + \epsilon.$

$\Hcal$ is said to be \emph{learnable in the realizable case} with respect to loss function $\lo$, if the above holds under the condition that $\Lo{P}(\H) = 0$.
We say that $\H$ is \emph{distribution-free learnable} (or simply \emph{learnable}) if it is learnable when $\P$ is the set of all probability measures over $X\times Y$.
\end{definition}

\begin{definition}[VC-dimension]\label{def:vc}
 We say that a collection of subsets $\G\subseteq 2^X$ of some domain $X$ \emph{shatters} a subset $B\subseteq X$ if for every $F \subseteq B$ there exists $G\in \G$ such that  $G\cap B = F$. The \emph{VC-dimension} of $\G$, denoted by $\vc(\G)$, is defined to be the supremum of the size of the sets that are shattered by $\G$. 
\end{definition}

It is easy to see that the VC-dimension of a binary hypothesis class $\H$ is independent of whether we view $\H$ as a subset of $X\times Y$ or pre-images of $1$ (thus, subsets of $X$).
It is well known that, for the binary loss, a hypothesis class is (distribution-free) learnable if and only if it has finite VC-dimension~\citep{blumer1989learnability}. 
Furthermore, any learnable binary hypothesis class can be learned with a \emph{proper learner}.

\begin{definition}[Proper Learnability]\label{def:proper_learn}
 We call a learner $\A$ a \emph{proper learner} for the class $\H$ if, for all input samples $S$, we have $\A(S)\in \H$.
 A class $\H$ is \emph{properly 
 learnable} if the conditions in Definition \ref{def:learn} hold with a proper learner $\A$.
\end{definition}

It has recently been  shown that there are classes of finite VC-dimension that are not properly learnable with respect to the adversarially robust loss \citep{MontasserHS19}.

\subsection{Decomposing the robust loss}
\label{sec:decompose}
In this work, we adapt the most commonly used notion of a adversarially robust loss \citep{MontasserHS19,Kamalika2019arxiv}.
Note that, we have $\rlo{\U}(h, x, y) = 1$ if and only if at least one of the following conditions holds:\\
 $\bullet$~ $h$ makes a mistake on $x$ with respect to label $y$, or\\
 $\bullet$~ there is a close-by instance $z\in \U(x)$ that $h$ labels different than $x$, that is, $x$ is close to $h$'s decision boundary.

The first condition holds when $(x,y)$ falls into the \emph{error region}, $\err{h} = (X\times Y)\setminus h$. The notion of error region then naturally captures the (non-adversarial) loss:
$$\bLo{P} (h) = \Pr_{(x,y)\sim P} [(x,y)\in  \err{h}] = P(\err{h}).$$
The second condition holds when $x$ lies in the \emph{margin area} of $h$. The following definition makes this notion explicit.

Let $h\in\F$ be some hypothesis. We define the \emph{margin area} of $h$ with respect to perturbation type $\U$, as the subset $\mar{\U}{h}\subseteq X\times Y$  defined by
\[
\mar{\U}{h} =  \{(x,y)\in X\times Y ~\mid~ \exists z\in\U(x): h(x)\neq h(z)\}
\]

Based on these definitions, the adversarially robust loss with respect to $\U$ is $1$ if and only if the sample $(x,y)$ falls into the error region $\err{h}$ and/or the margin area $\mar{\U}{h}$ of $h$:
$$\rLo{\U}{P} (h) 
= P(\err{h}\cup\mar{\U}{h}).$$

\begin{definition}\label{def:margin_class}
For class $\H$, we refer to the collection $\mH = \{ \mar{\U}{h} ~\mid~ h\in\H\}$ as the \emph{margin class} of $\H$.
\end{definition}

While we defined that margin areas $\mar{\U}{h}$ as subsets of $X\times Y$, it is sometimes natural to identify them with their projection on $X$, thus simply as subsets of $X$. 

\begin{remark}
There is more than one way to formulate a loss function that captures both classification accuracy and robustness to (small) adversarial perturbations. The notion we adopt has the property that even the true labeling function can have positive robust loss, if the true labels themselves change within the adversarial neighbourhoods. A natural alternative is to say an adversarial point is a point in the neighbourhood of an instance that is misclassified by the classifier.
However, note that such a notion cannot be phrased as a loss function $\ell(h, x, y)$ (as it depends on the true label of the perturbed instance).
Previous studies have provided excellent discussions of the various options
\citep{diochnos2018adversarial,GourdeauKK019}.
\end{remark}

\paragraph{Semi-Supervised Learning (SSL)}
Since the margin areas $\mar{\U}{h}$ can naturally be viewed as subsets of $X$, their weights $P(\mar{\U}{h})$ under the data generating distribution can potentially be estimated with samples from $P_X$, that is, from \emph{unlabeled data}.
A learner that takes in both a labeled sample $S$ from $P$ and an unlabeled sample $T$ from $P_X$, is called a \emph{semi-supervised learner}.
For scenarios where robust learning has been shown to be hard, we explore whether this hardness can be overcome by SSL.
We consider semi-supervised learners that take in labeled and unlabeled samples, and also \emph{idealized semi-supervised learners} that, in addition to a labeled samples have oracle access to probability weights of certain subsets of $X$ \cite{GopfertBBGTU19}.

\section{Robust Learning of VC Classes}
\label{s:vc_classes}
It has been shown that there is a class $\H$ of bounded VC-dimension ($\vc(\H) = 1$ in fact) and a perturbation type $\U$ such that $\H$ is not robustly properly learnable \cite{MontasserHS19}, even if the distribution is realizable with respect to $\H$ under the $\U$-robust loss. The perturbation type $\U$ in that lower bound construction can actually chosen to be balls with respect to some metric over $X = \reals^d$ (for any $d$, even $d=1$).
The same work also shows that if a class has bounded VC-dimension, then it is (improperly) robustly learnable with respect to any perturbation type $\U$.

\begin{theorem}[\cite{MontasserHS19}]
  (1) There is a class $\H$ over $X = \reals^d$ with $\vc(\H) = 1$, and a set of distributions $\P$ with $\rLo{r}{P}(\H) = 0$ for all $P\in \P$, such that $\H$ is not proper learnable over $\P$ with respect to loss function $\rlo{r}$. \\
  (2) Let $X$ be any domain and $\U:X \to 2^X$ be any type of perturbation, and let $\H\subseteq \{0,1\}^X$ be a hypothesis class with finite VC-dimension. Then $\H$ is distribution-free agnostic learnable with respect to loss function $\rlo{\U}$.
\end{theorem}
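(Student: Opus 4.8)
The plan is to prove the two parts separately: part~(1) is an impossibility (lower bound) and part~(2) a positive (upper bound); together they form the dichotomy established in \cite{MontasserHS19}, and I would adapt their constructions and techniques in each case. The common thread is that the obstruction to \emph{proper} robust learning is the potential richness of the margin class $\mH$, whereas improper learning escapes this via a compression/boosting scheme whose size depends only on combinatorial parameters of $\H$.

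\textbf{Part (1).} I would exhibit a concrete witness: the domain $X=\reals$ (one dimension suffices) equipped with a carefully chosen metric, a radius $r$, and a class $\H$ with $\vc(\H)=1$ whose margin class $\mH$ shatters arbitrarily large finite sets, so that $\vc(\mH)=\infty$. The idea is to place, for each $n$, a block of $n$ ``decision sites'' together with auxiliary points so that the balls $\B_r(x)$ around the sites interact, and so that as $h$ ranges over $\H$ the pattern of which sites lie in $\mar{r}{h}$ realizes every subset of the block, while no two sites can be independently toggled by the \emph{binary labels} of members of $\H$ (keeping $\vc(\H)=1$). Given such a class I would run a no-free-lunch / Fano-type argument: fix a target sample size $m$, pick $n\gg m$, and let $\P$ be a family of distributions each supported on one block and each robustly realizable, i.e. with $\rLo{r}{P}(h^\star)=0$ for some $h^\star\in\H$, but arranged so that an i.i.d.\ sample of size $m$ reveals the labels of at most $m$ sites and gives essentially no information about the margin membership of the remaining $\ge n-m$ sites. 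A proper learner must commit to some $h\in\H$, and since distinct members of $\H$ disagree about margin membership on the unseen sites, averaging over $\P$ forces expected robust loss bounded away from $0$. The main obstacle is engineering the metric and class so that $\vc(\H)=1$, robust realizability of every $P\in\P$, and the ``labels hide the margin'' property hold \emph{simultaneously}; this delicate construction is the heart of the lower bound.

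\textbf{Part (2).} Here I would use a sample-compression argument. Since $\vc(\H)<\infty$, the dual VC dimension $\vc^\star(\H)$ is finite (at most $2^{\vc(\H)+1}$). First note that $\rlo{\U}(h,x,y)=1$ iff $h$ mislabels some $z\in\U(x)$, so robustly fitting a sample means correctly labeling the ``inflated'' sets $\U(x_i)$. The algorithm: (i) run an approximate robust ERM over $\H$ to obtain a finite pool of candidate hypotheses that roughly match $\rLo{\U}{S}(\H)$ on the sample $S$; (ii) apply an $\alpha$-Boost--style procedure over this pool with respect to the robust loss on $S$, producing a sparse majority-vote predictor $\bar h$ whose robust \emph{empirical} loss is near-optimal; (iii) observe that each weak hypothesis entering $\bar h$ is the robust ERM on a bounded-size subsample and the number of boosting rounds is bounded, so $S\mapsto\bar h$ is a sample compression scheme of size $k=\mathrm{poly}(\vc(\H),\vc^\star(\H),\log(\cdot))$; (iv) invoke the generalization bound for compression schemes --- valid since $\rlo{\U}$ is $\{0,1\}$-valued --- to conclude $\rLo{\U}{P}(\bar h)\le \rLo{\U}{P}(\H)+\epsilon$ with probability $1-\delta$ once the sample size exceeds a bound depending on $k,\epsilon,\delta$. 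The realizable case is immediate, and the agnostic case follows from the agnostic compression bound (or a standard realizable-to-agnostic reduction). The main obstacle is step~(iii): bounding the compression size by $\vc(\H)$ and $\vc^\star(\H)$ alone --- the dual VC dimension appears precisely because fitting the inflated sets $\U(x_i)$ is a dual covering/hitting problem --- and checking that the boosting guarantee and its margin analysis transfer cleanly to the robust loss.
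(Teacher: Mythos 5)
This theorem is not proved in the paper at all---it is quoted from \cite{MontasserHS19} as background---so there is no in-paper argument to compare against; the question is whether your outline reconstructs the cited proof. At the level of strategy it does: the lower bound in \cite{MontasserHS19} rests on a class with $\vc(\H)=1$ whose margin/robust-loss class has infinite VC dimension (the paper itself highlights exactly this point right after the theorem), realized by ball perturbations in $\reals^d$, and the upper bound is the compression-plus-boosting scheme whose size involves the dual VC dimension, with the agnostic case via the realizable-to-agnostic reduction of \cite{david2016supervised}. So your overall plan is the same as the source's.

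As a proof, however, part (1) of your proposal is incomplete: you list the properties the construction must satisfy ($\vc(\H)=1$, robust realizability of every $P$ in the family, and margin membership on unseen sites being undetermined by the observed labels) but never exhibit the class, the metric, or the radius, and you yourself flag this as ``the heart of the lower bound.'' That construction is essentially all of part (1), so the proposal reduces the statement to its hardest step without carrying it out. In part (2) the skeleton is right but two points are blurred relative to the actual argument: the weak hypotheses are produced by running (robust) ERM on small subsamples drawn from the boosting distribution over $S$, not by first forming ``a finite pool of candidate hypotheses'' via approximate robust ERM; and the dual VC dimension enters at the sparsification of the boosted majority vote---one needs uniform convergence over the dual class so that the sparsified vote stays correct at \emph{every} perturbed point $z\in\U(x_i)$, not just at the sample points---which is precisely what makes the compression size, and hence the sample complexity, depend on the dual dimension. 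Without that dual uniform-convergence step your step (iii) does not go through as stated, though it is fixable along the lines of the original proof.
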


While the second part of the above theorem seems to settle adversarially robust learnability for binary hypothesis classes, the positive result is achieved with a compression-based learner, which has potentially much higher sample complexity than what suffices for the binary loss. In fact, the size of the best known general compression scheme~\citep{moran2016sample} depends on the VC-dimension of the dual class of $\H$, making the sample complexity of this approach generally exponential in VC-dimension of $\H$. 

We first show that the impossibility part of the above theorem crucially depends on the combination of the class $\H$ and a pertubation type $\U$ (despite these being balls in a Euclidian space) so that the margin class $\mH$ has infinite VC-dimension. We prove that, if both $\H$ and $\mH$ have finite VC-dimension then $\H$ is (distribution-free) learnable with respect to the robust loss, with a proper learner.

\begin{theorem}[Proper learnability for finite VC and finite margin-VC]\label{thm:learnability_pos}
Let $X$ be any domain and $\H \subseteq \F$ be a hypothesis class with finite VC-dimension. Further, let $\U: X \to 2^X$ be any perturbation type such that $\mH$ has finite VC-dimension. We set $D = \vc(\H) + \vc(\mH)$.
 Then $\H$ is distribution-free (agnostically) properly learnable with respect to the robust loss $\rlo{\U}$, and the sample complexity is 
 $O\left(\frac{D\log(D) + \log(1/\delta) }{\epsilon^2}\right).$
\end{theorem}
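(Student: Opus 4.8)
The plan is to show that the straightforward \emph{robust empirical risk minimizer} (RERM) suffices. Given a labeled sample $S$, let the learner output any $\hat h \in \arg\min_{h\in\H}\rLo{\U}{S}(h)$; this learner is proper because it optimizes within $\H$. By the usual reduction of agnostic learnability to uniform convergence, the theorem follows once we show that the \emph{robust loss class} of $\H$ has VC-dimension $O(D\log D)$: if, with probability $1-\delta$, all empirical robust losses over $\H$ are within $\epsilon/2$ of their true values, then $\rLo{\U}{P}(\hat h)\le \rLo{\U}{P}(\H)+\epsilon$ in the usual way (using that $\hat h$ minimizes the empirical loss and comparing against a near-optimal $h\in\H$). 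By the decomposition of the robust loss in Section~\ref{sec:decompose}, the robust loss class is exactly the set system
\[
\mathcal{R} \;=\; \{\, \err{h}\cup\mar{\U}{h} ~:~ h\in\H \,\}\;\subseteq\; 2^{X\times Y},
\]
and measurability of all events involved is guaranteed by the standing assumption on $\U$ in Section~\ref{s:setup}, so the standard distribution-free VC uniform-convergence bounds apply.

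The one substantial step is bounding $\vc(\mathcal{R})$. Its two ingredient systems are simple. The error-region system $\{\err{h}:h\in\H\}\subseteq 2^{X\times Y}$ has VC-dimension exactly $\vc(\H)$: this is the classical fact that the binary-loss class of $\H$ has the same VC-dimension as $\H$ (within any fibre $\{x\}\times\{0,1\}$ the two error events are complementary, so a shattered set has distinct $X$-coordinates, and shattering it is then equivalent to $\H$ shattering those coordinates). The margin system $\{\mar{\U}{h}:h\in\H\}$ has VC-dimension $\vc(\mH)$ by hypothesis; note that since each $\mar{\U}{h}$ is a cylinder over $X$ (its membership ignores the label), this value is the same whether margin areas are read as subsets of $X$ or of $X\times Y$. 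Now, for any finite $A\subseteq X\times Y$ with $|A|=n$, the trace on $A$ of a set $\err{h}\cup\mar{\U}{h}$ is determined by the traces of $\err{h}$ and of $\mar{\U}{h}$ separately; hence the number of distinct traces that $\mathcal{R}$ realizes on $A$ is at most the product of the growth functions of the two ingredient systems at $n$. By the Sauer--Shelah lemma this product is at most $(en/\vc(\H))^{\vc(\H)}\,(en/\vc(\mH))^{\vc(\mH)}\le (en)^{D}$. If $\mathcal{R}$ shattered $A$ we would need $2^n\le (en)^{D}$, which forces $n=O(D\log D)$; therefore $\vc(\mathcal{R})=O(D\log D)$.

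It remains to feed this into an off-the-shelf agnostic uniform-convergence bound: for any $\{0,1\}$-valued loss class of VC-dimension $V$, an \iid sample of size $O\!\big((V+\log(1/\delta))/\epsilon^2\big)$ makes all empirical losses $\epsilon$-close to their expectations with probability $1-\delta$ (the tight agnostic rate, with no extra $\log(1/\epsilon)$ factor). Substituting $V=O(D\log D)$ yields the claimed sample complexity $O\!\big((D\log D+\log(1/\delta))/\epsilon^2\big)$ and, via the reduction above, that RERM is $\epsilon$-suboptimal within $\H$ for the loss $\rlo{\U}$; the learner is proper, so $\H$ is properly learnable in the stated sense. (Computability of the empirical robust loss is not needed for this information-theoretic sample-complexity claim.)

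I expect the only real obstacle to be the VC bound on the combined class $\mathcal{R}$; the rest is a direct invocation of classical learning theory. The $\log D$ factor enters precisely at the union step and mirrors the known phenomenon that the VC-dimension of $\{C_1\cup C_2\}$ for systems of VC-dimensions $d_1,d_2$ can be $\Theta((d_1+d_2)\log(d_1+d_2))$, so it is not removable by this route; the separate dependencies on $\vc(\H)$ and $\vc(\mH)$, however, are already essentially tight.
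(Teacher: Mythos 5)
Your proposal is correct and follows essentially the same route as the paper: analyze RERM for the robust loss by bounding the VC-dimension of the class $\{\err{h}\cup\mar{\U}{h} : h\in\H\}$ via a Sauer--Shelah growth-function product argument (giving $O(D\log D)$), and then invoke standard agnostic uniform convergence. Your write-up merely makes explicit the ``simple counting argument'' and the measurability remarks that the paper leaves to a sketch.
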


\begin{proof}[Proof Sketch]
We provide the more detailed argument in the appendix.
Recall that a set $S\subseteq X\times Y$ is said to be an $\epsilon$-approximation of $P$ with respect to $\H \subseteq X\times Y$ if
for all $h\in \H$ we have  $\left|P [h] - \frac{|h \cap S|}{|S|}\right|\leq \epsilon$,
that is, if the empirical estimates with respect to $S$ of the sets in $h$ are $\epsilon$-close to their true probability weights.
Consider the class of subsets  $\G = \{(\err{h}\cup\mar{\U}{h}) \subseteq X \times Y ~\mid~ h\in \H\}$ of point-wise unions of error and margin regions. A simple counting argument shows that $\vc(\G) \leq D\log(D)$, where $D = \vc(\H) + \vc(\mH)$.
 Thus, by basic VC-theory, a sample of size $\Theta\left(\frac{D\log D + \log(1/\delta) }{\epsilon^2}\right)$ will be an $\epsilon$-approximation of $\G$ with respect to $P$ with probability at least $1-\delta$. 
 Thus any empirical risk minimizer with respect to $\rlo{\U}$ is a successful proper and agnostic robust learner for $\H$.
\end{proof}

\begin{observation}
 We believe the conditions of Theorem \ref{thm:learnability_pos} hold for most natural classes and perturbation types $\U$.
 Eg.~if $\H$ is the class of linear predictors in $\reals^d$ and $\U$ are sets of balls with respect to some $\ell_p$-norm, then both $\H$ and $\mH$ have finite VC-dimension (see also \cite{YinRB19}).
\end{observation}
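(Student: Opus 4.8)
The plan is to verify the two hypotheses of Theorem~\ref{thm:learnability_pos} for $\H$ the class of (affine) halfspaces on $X=\reals^d$ and $\U(x)=\B_r(x)$ the closed ball of a fixed radius $r$ in some fixed $\ell_p$-norm. That $\vc(\H)\le d+1$ is classical, so the whole argument reduces to bounding $\vc(\mH)$. Since the defining condition ``$\exists z\in\U(x):h(x)\neq h(z)$'' does not involve $y$, every margin area has the form $\mar{\U}{h}=M_h\times Y$ for some $M_h\subseteq X$, and hence $\vc(\mH)$ equals the VC-dimension of the class $\{M_h:h\in\H\}$ of $X$-projections.

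Next I would describe $M_h$ explicitly. Fix a halfspace $h=h_{w,b}=\{x:\langle w,x\rangle\ge b\}$ with $w\neq 0$ (if $w=0$ the hypothesis is constant and $M_h=\emptyset$). A point $x$ lies in $M_h$ precisely when the $\ell_p$-ball $\B_r(x)$ meets the side of the hyperplane $\{z:\langle w,z\rangle=b\}$ opposite to $x$; by the Hölder identity $\sup_{\|u\|_p\le r}\langle w,u\rangle=r\|w\|_q$ (with $1/p+1/q=1$), this is equivalent to the pair of linear inequalities $b-r\|w\|_q\le\langle w,x\rangle\le b+r\|w\|_q$, up to the open/closed status of the two boundary hyperplanes (which depends on the conventions fixed for $\U(x)$ and $h$ and is immaterial for the VC bound below). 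Rescaling $w$ so that $\|w\|_q=1$ — every halfspace admits such a representative — shows that each $M_h$ is a slab $\{x:b-r\le\langle w,x\rangle\le b+r\}$, i.e.\ an intersection of two parallel halfspaces in $\reals^d$.

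Consequently $\{M_h:h\in\H\}$ is contained in the class of all sets $A\cap B$ with $A,B$ halfspaces in $\reals^d$, and a routine growth-function bound finishes the job: the shatter function of this intersection class is at most the square of that of halfspaces, hence at most $(em/(d+1))^{2(d+1)}$, and $2^m$ exceeds this only for $m=O(d)$, so $\vc(\mH)=O(d)$. Combining, $D=\vc(\H)+\vc(\mH)=O(d)$ is finite, so Theorem~\ref{thm:learnability_pos} applies and the sample complexity specializes to $O\!\big((d\log d+\log(1/\delta))/\epsilon^2\big)$; compare \cite{YinRB19} for related computations for linear predictors. I do not foresee a genuine obstacle here: the only delicate step is turning ``$\B_r(x)$ touches the far side of the hyperplane'' into the two affine inequalities — exactly where the dual norm $\|w\|_q$ enters — together with minor bookkeeping of open versus closed boundaries; everything else is standard VC calculus, and an analogous (slightly more involved) argument handles other natural class/perturbation pairs, supporting the first sentence of the statement.
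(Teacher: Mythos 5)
Your proposal is correct, and it supplies exactly the argument the paper leaves implicit (the observation is stated without proof, deferring to \cite{YinRB19}): the dual-norm computation showing each margin set of a halfspace under an $\ell_p$-ball perturbation is a slab, i.e.\ an intersection of two halfspaces, followed by the standard growth-function bound for intersections, giving $\vc(\mH) = O(d)$ so that Theorem~\ref{thm:learnability_pos} applies with $D = O(d)$. The boundary (open/closed) bookkeeping you flag is indeed immaterial, since half-open slabs still embed in intersections of two (open or closed) halfspaces.
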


\subsection{Using unlabeled data for robust proper learning}
In light of the above two general results, we turn to investigate whether unlabeled data can help in overcoming the discrepancy between the two setups.
In particular, under various additional assumptions, we consider the case of $\vc(\H)$ being finite but $\vc(\mH)$ (potentially) being infinite and a learner having additional access to $P_X$.

We model knowledge of $P_X$ as the learner having access to an oracle that returns the probability weights of various subsets of $X$. We say that the learner has access to a \emph{margin oracle for class $\H$}
if, for every $h\in\H$, it has access (can query) the probability weight of the margin set of $h$, that is $P(\mar{\U}{h})$. Since the margin areas can be viewed as subsets of $X$, if the margin class of $\H$ under perturbation type $\U$ has finite VC-dimension, a margin oracle can be approximated using an unlabeled sample from the distribution $P$. 

Similarly, one could define an \emph{error oracle for $\H$} as an oracle, that, for every $h\in\H$ would return the weight of the error sets $P(\err{h})$. This is typically approximated with a labeled sample from the data-generating distribution, if the class has finite VC-dimension. This is similar to the  settings of learning by distances \cite{Ben-DavidIK95} or learning with statistical queries \cite{Kearns98, Feldman17}.

To minimize the adversarial loss however, the learner needs to find (through oracle access or through approximations by samples) a minimizer  of the weights $P(\err{h} \cup \mar{\U}{h})$. We now first show that having access to both an exact error oracle and an exact margin oracle does not suffice for this. 

\begin{theorem}\label{thm:stat_impossibility}
There is a class $\H$ with $\vc(\H) = 1$ over a domain $X$ with $|X| = 7$, a perturbation type $\U:X \to 2^X$, and two distributions $P^1$ and $P^2$ over $X\times \{0,1\}$, that are indistinguishable with error and margin oracles for $\H$, while their robust loss minimizers in $\H$ differ. 
\end{theorem}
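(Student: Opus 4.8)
The plan is to exploit the inclusion--exclusion identity
\[
\rLo{\U}{P}(h) \;=\; P(\err{h}) \,+\, P(\mar{\U}{h}) \,-\, P(\err{h}\cap\mar{\U}{h}),
\]
so that revealing $P(\err{h})$ (error oracle) and $P(\mar{\U}{h})$ (margin oracle) still leaves the overlap term, hence the robust loss, under-determined. We then construct two distributions that agree on both oracles for every $h\in\H$ but differ enough on these overlaps to swap the robust-loss minimizer inside $\H$.

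We reduce this to a small linear feasibility problem. Since $\mar{\U}{h}$ ignores the label coordinate, $\mar{\U}{h}=M_h\times\{0,1\}$ with $M_h=\{x\in X:\exists z\in\U(x),\,h(z)\neq h(x)\}$, so $P(\mar{\U}{h})=P_X(M_h)$ depends only on the marginal. It therefore suffices to take $P^1,P^2$ with a \emph{common} marginal $P_X=(w_x)_{x\in X}$ (which already fools every margin query, even for all $h\in\F$) that additionally agree on $P(\err{h})$ for every $h\in\H$. Writing $q=(q_x)_{x\in X}$ with $q_x=P(\{(x,1)\})$, one checks that $P(\err{h}) = \sum_{x:\,h(x)=1} w_x + \langle\sigma_h,q\rangle$ and $\rLo{\U}{P}(h) = P_X(M_h) + \sum_{x\notin M_h,\,h(x)=1} w_x + \langle\tau_h,q\rangle$, where $\sigma_h(x)=(-1)^{h(x)}$ (so $+1$ if $h(x)=0$ and $-1$ if $h(x)=1$) and $\tau_h(x)=\sigma_h(x)\cdot\indct{x\notin M_h}$. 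Hence, if $v:=q^1-q^2$ lies in the kernel of $\sigma_h$ for every $h\in\H$ (both oracles fooled) while $\sum_{x\in M_h}\sigma_h(x)v_x\neq 0$ for some $h\in\H$ (so that $\langle\tau_h,v\rangle=-\sum_{x\in M_h}\sigma_h(x)v_x\neq 0$ and the robust loss of $h$ shifts), the resulting $P^1,P^2$ are oracle-indistinguishable for $\H$ with different robust-loss profiles on $\H$; choosing $v$ and the base masses so that a \emph{different} hypothesis becomes the unique minimizer completes the construction.

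It remains to realize this over seven points. Take $X=\{1,\dots,7\}$ with $\U(x)=\{x-1,x,x+1\}\cap X$ (balls of radius $1$ on the line), and $\H=\{h_a,h_b\}$ where, identifying a hypothesis with the pre-image of $1$, $h_a=\{1,2,3\}$ and $h_b=\{1,2,3,4,5\}$. These are two thresholds, so $\vc(\H)=1$ ($\{4\}$ is shattered, and no two-point set can be since $|\H|=2$), and $M_{h_a}=\{3,4\}$, $M_{h_b}=\{5,6\}$. Imposing $\langle\sigma_{h_a},v\rangle=\langle\sigma_{h_b},v\rangle=0$, together with $v_5=v_6$ (keeping the robust loss of $h_b$ fixed) and $v_3\neq v_4$ (moving that of $h_a$), one finds e.g.\ $v=\epsilon\cdot(0,0,-2,-1,1,1,-3)$ for small $\epsilon>0$; then $\rLo{\U}{P^1}(h_a)-\rLo{\U}{P^2}(h_a)=-\epsilon$ while $\rLo{\U}{P^1}(h_b)=\rLo{\U}{P^2}(h_b)$. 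Choosing the common marginal with total mass on $\{5,6\}$ strictly between $2\epsilon$ and $3\epsilon$, and label masses $q^2$ and $q^1=q^2+v$ lying in the boxes $[0,w_x]$ so that $P^1,P^2$ are genuine distributions, one obtains $P^1(\err{h})=P^2(\err{h})$ for $h\in\H$ and identical margin weights, but $\rLo{\U}{P^1}(h_a)<\rLo{\U}{P^1}(h_b)$ and $\rLo{\U}{P^2}(h_a)>\rLo{\U}{P^2}(h_b)$. Thus the unique robust minimizer in $\H$ is $h_a$ under $P^1$ and $h_b$ under $P^2$; an explicit numerical choice is given in the appendix.

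The only genuinely nontrivial step is the design in the last paragraph: the robust loss is insensitive to the labels inside the margin regions $M_h$ whereas the error oracle is not, so the label-mass perturbation $v$ must be routed partly through $\bigcup_{h\in\H}M_h$ and partly outside it, balancing the two contributions so that every error query is unchanged yet exactly one robust loss moves, all while preserving $\vc(\H)=1$. Verifying that a seven-point, two-threshold configuration has enough freedom for this is the crux; the remaining (in)equalities are routine and deferred to the appendix.
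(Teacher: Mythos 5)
Your proposal is correct, but it gets to the theorem by a genuinely different construction than the paper's. Both arguments rest on the same underlying obstruction---the oracles reveal $P(\err{h})$ and $P(\mar{\U}{h})$ but not their overlap, hence not $P(\err{h}\cup\mar{\U}{h})$---yet the paper realizes it with deterministic labels (both distributions label every point $0$) and two \emph{different} marginals, over a bespoke class of two two-point indicators with hand-built perturbation sets, arranging equal error and margin weights while the unions differ; you instead fix a \emph{common} marginal (so every margin query is matched automatically, in fact for every Borel predictor, not just $h\in\H$) and move only the conditional label masses $q$ within the kernel of the error functionals $\sigma_h$, exploiting that the robust loss is insensitive to labels on $M_h$, and you do this for the natural pair of two thresholds on $\{1,\dots,7\}$ with unit balls on the line. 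The feasibility you defer to a (nonexistent) appendix does check out: with $w=(0,0,2\epsilon,\epsilon,1.25\epsilon,1.25\epsilon,1-5.5\epsilon)$, $q^2=(0,0,2\epsilon,\epsilon,0,0,3\epsilon)$ and $q^1=q^2+v=(0,0,0,0,\epsilon,\epsilon,0)$, one gets $P^1(\err{h_a})=P^2(\err{h_a})=4\epsilon$, $P^1(\err{h_b})=P^2(\err{h_b})=4.25\epsilon$, margin weights $3\epsilon$ and $2.5\epsilon$ under both distributions, yet $\rLo{\U}{P^1}(h_a)=5\epsilon<5.5\epsilon=\rLo{\U}{P^1}(h_b)$ while $\rLo{\U}{P^2}(h_a)=6\epsilon>5.5\epsilon=\rLo{\U}{P^2}(h_b)$, so the minimizers swap as required, and $\vc(\H)=1$, $|X|=7$ hold. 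As for what each route buys: your linear-algebraic recipe (perturb $q$ in the kernel of the error queries but outside the kernel of the robust-loss functionals) is systematic, uses a natural class and perturbation type, and yields the slightly stronger fact that even exact knowledge of $P_X$ cannot resolve the ambiguity (consistent with Theorem~\ref{thm:robust_realizable}, since your distributions are not robustly realizable); the paper's label-deterministic, marginal-varying gadget is the one that extends directly to Theorem~\ref{thm:double} by adjoining $h_r$ and $h_c$ to restore $0/1$- and margin-realizability, an extension your noisy-label construction would need to be reworked to support.
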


\begin{proof}
Let $X = \{x_1, x_2,\ldots, x_7\}$ be the domain.
We consider two distributions $P^1$ and $P^2$ over $X\times \{0,1\}$.
Both have true label $0$ on all points, that is $P(y = 1 |x) = 0$ for all $x\in X$. However their marginals $P^1$ and $P^2$ differ:
\begin{align*}
P^1_X(x_1) & ~=~ P^1_X(x_3) = 0,~
P^1_X(x_2)  ~=~ 2/6, \text{and}\\
P^1_X(x_i) & ~=~ 1/6 ~\text{for}~ i\in\{4, 5, 6, 7\}.\\
P^2_X(x_4) & ~=~ P^2_X(x_6) = 0,~
P^2_X(x_5)  ~=~ 2/6, \text{and}\\
P^2_X(x_i) & ~=~ 1/6 ~\text{for}~ i\in\{1, 2, 3, 7\}.
\end{align*}

The class $\H$ consists of two functions: $h_1 = \indct{x = x_2 \lor x = x_3}$ and $h_2 = \indct{x = x_5 \lor x= x_6}$. Further, we consider the following perturbation sets (for readability, we first state them without the points themselves):
\begin{align*}
& \tilde{\U}(x_1) = \{x_2\},~ \tilde{\U}(x_2) = \{x_1, x_3\},~ \tilde{\U}(x_3) = \{x_2\},\\
& \tilde{\U}(x_4) = \{x_5\},~ \tilde{\U}(x_5) = \{x_4, x_6\},~ \tilde{\U}(x_6) = \{x_5\},\\
& \tilde{\U}(x_7) = \emptyset
\end{align*}
Now we set $\U(x_i) = \tilde{\U}(x_i)\cup \{x_i\}$, so that each point is included in its own perturbation set.
Now, both $h_1$ and $h_2$ have $0/1$-loss $2/6 = 1/3$ on both $P^1$ and $P^2$.
And for both $h_1$ and $h_2$ the margin areas have weight $2/6 = 1/3$ on both $P^1$ and $P^2$. However, the adversarial loss minimizer for $P^1$ is $h_1$ and for $P^2$ is $h_2$ (by a gap of $1/6$ each).
\end{proof}

While the impossibility result in the above example, of course, can be overcome by estimating the weights of the seven points in the domain, the construction exhibits that merely estimating classification error and weights of margin sets does not suffice for proper learning with respect to the adversarial loss. The example shows, that the learner also needs to take into account the interactions (intersections between the sets) of the two components of the adversarial loss.
However the weights of the intersection sets $\err{h}\cap\mar{\U}{h}$, inherently involve label information.

In the following subsection we show that realizability with respect to the robust loss implies that robust learning becomes possible with access to a (bounded size) labeled sample from the distribution and additional access to a margin oracle or a (bounded size) unlabeled sample. In the appendix Section \ref{app:add_results}, we further explore weakening this assumption to only require $0/1$-reazability with access to stronger version of the margin oracle.

\subsubsection{Robust realizability:\\ $\exists h^*\in \H$ with $\rLo{\U}{P}(h^*) = 0$}

This is the setup of the impossibility result for proper learning \cite{MontasserHS19}. We show that proper learning becomes possible with access to a margin oracle for $\H$.

\begin{theorem}\label{thm:robust_realizable}
Let $X$ be some domain, $\H$ a hypothesis class with finite VC-dimension and $\U:X\to 2^X$ any perturbation type.
If a learner is given additional access to a margin oracle for $\H$, then $\H$ is properly learnable with respect to the robust loss $\rlo{\U}$ and the class of distributions $P$ that are robust-realizable by $\H$, $\rLo{\U}{P}(\H) = 0$, with labeled sample complexity $\tilde{O}(\frac{\vc(\H) + \log(1/\delta)}{\epsilon})$ 
\end{theorem}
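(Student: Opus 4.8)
The plan is to use robust realizability to reduce proper robust learning to ordinary realizable PAC learning over a restricted subclass, with the margin oracle supplying the restriction. Let $h^*\in\H$ satisfy $\rLo{\U}{P}(h^*)=0$, i.e.\ $P(\err{h^*}\cup\mar{\U}{h^*})=0$; then in particular $P(\mar{\U}{h^*})=0$ and $P(\err{h^*})=0$. The first fact means $h^*$ lies in the subclass
\[
\H_0 \;=\; \{h\in\H ~:~ P(\mar{\U}{h}) = 0\},
\]
which the learner can isolate by querying the margin oracle (this is an information-theoretic, not computational, statement); the second fact means $h^*$ has zero binary error, hence zero empirical binary error on every sample.

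The proposed learner first forms $\H_0$ via the margin oracle, then draws an i.i.d.\ labeled sample $S$ of size $m = \tilde{O}\!\left(\frac{\vc(\H)+\log(1/\delta)}{\epsilon}\right)$ and outputs any $h\in\H_0$ with $\bLo{S}(h)=0$, i.e.\ a binary ERM restricted to $\H_0$; such an $h$ exists because $h^*\in\H_0$ is consistent with $S$. For the analysis, note that $\vc(\H_0)\le\vc(\H)$ since $\H_0\subseteq\H$, and that the induced family of error regions $\{\err{h} \mid h\in\H_0\}$ has VC-dimension at most $\vc(\H)$ (a standard fact for the binary loss). Hence the standard realizable-case uniform convergence bound \citep{blumer1989learnability} applies: with the chosen $m$, with probability at least $1-\delta$, every $h\in\H_0$ with $\bLo{S}(h)=0$ has $\bLo{P}(h)=P(\err{h})\le\epsilon$. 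For any such output $h$, using $P(\mar{\U}{h})=0$,
\[
\rLo{\U}{P}(h) = P(\err{h}\cup\mar{\U}{h}) \le P(\err{h}) + P(\mar{\U}{h}) \le \epsilon,
\]
so the learner is a successful proper robust learner with the claimed labeled sample complexity.

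The conceptual crux (and the only place the structure of the problem really enters) is that plain labeled ERM over all of $\H$ provably fails here --- this is exactly the impossibility of \citet{MontasserHS19} --- because a binary-consistent hypothesis may still carry a heavy margin region; the margin oracle is precisely what lets us discard such hypotheses \emph{before} running ERM, and robust realizability guarantees that the surviving subclass $\H_0$ still contains a binary-consistent hypothesis, so the realizable bound can be invoked. The remaining points are routine: verifying that $\{\err{h}\mid h\in\H\}$ is a measurable family of VC-dimension $\vc(\H)$, and observing that since the margin oracle returns exact weights the restriction step consumes no samples, so only the binary ERM sample enters the bound (in the unlabeled-sample variant of Theorem~\ref{thm:robust_realizable_unlabeled} one instead thresholds the \emph{estimated} margin weight at $\epsilon/2$, which is where the extra, still near-linear, unlabeled sample and the split of the error budget come from).
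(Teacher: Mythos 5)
Your proposal is correct and follows essentially the same argument as the paper's proof: restrict attention to hypotheses with zero margin weight via the oracle, pick one consistent with the labeled sample, and apply the realizable-case $\tilde{O}\big((\vc(\H)+\log(1/\delta))/\epsilon\big)$ bound to get $P(\err{h})\le\epsilon$ and hence $\rLo{\U}{P}(h)\le\epsilon$. The only difference is cosmetic — you prune by margin weight before taking the version space, the paper prunes the version space afterwards — and the resulting set of candidate outputs is identical.
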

\begin{proof}[Proof Sketch]
By the robust realizability, there is an $h^*\in \H $ with $\rLo{\U}{P}(h^*) = 0$ implying that $\bLo{P}(h^*) = 0$, that is, the distribution is (standard) realizable by $\H$. Basic VC-theory tells us that an iid sample $S$ of size $\Tilde{\Theta}\left(\frac{\vc(\H)+ \log(\frac{1}{\delta})}{\epsilon}\right)$ guarantees that all functions in the \emph{version space} of $S$ (that is all $h$ with $\bLo{S}(h) = 0$) have true binary loss at most $\epsilon$ (with probability at least $1-\delta$). Now, with access to a margin oracle for $\H$ a learner can remove all hypotheses with $P(\mar{U}{h}) > 0$ from the version space and return any remaining hypothesis (at least $h^*$ will remain).
\end{proof}

Note that the above procedure crucially depends on actual access to a margin oracle. The weights $P(\mar{U}{h})$ cannot be generally estimated if $\mH$ has infinite VC-dimension, as the impossibility result for proper learning from finite samples shows. Thus proper learnability even under these (strong) assumptions cannot always be manifested by a semi-supervised proper learner that has access only to finite amounts of unlabeled data. 
We also note that the above result (even with access to $P_X$) does not allow for an extension to the agnostic case via the type of reductions known from compression-based bounds~\citep{MontasserHS19, moran2016sample}. 

On the other hand, if the margin class has finite, but potentially much larger VC-dimension than $\H$, then we can use unlabeled data to approximate the margin oracle in Theorem \ref{thm:robust_realizable}. The following result thus provides an improved bound on the number of \emph{labeled samples} that suffice for robust proper learning under the assumptions of Theorem \ref{thm:learnability_pos}.

\begin{theorem}\label{thm:robust_realizable_unlabeled}
Let $X$ be some domain, $\H$ a hypothesis class with finite VC-dimension and let $\U:X\to 2^X$ be a perturbation type such that the margin class $\mH$ also has finite VC-dimension.
If a learner is given additional access to an (unlabeled) sample $T$ from $P_X$, then $\H$ is properly learnable with respect to the robust loss $\rlo{\U}$ and the class of distributions $P$ that are robust-realizable by $\H$, $\rLo{\U}{P}(\H) = 0$, with labeled sample complexity $\tilde{O}(\frac{\vc(\H) + \log(1/\delta)}{\epsilon})$ and unlabeled sample complexity $\tilde{O}(\frac{\vc(\mH) + \log(1/\delta)}{\epsilon})$ 
\end{theorem}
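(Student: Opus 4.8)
The plan is to mimic the argument for Theorem~\ref{thm:robust_realizable}, but to replace the exact margin oracle by empirical estimates computed on the unlabeled sample $T$; this substitution is legitimate here precisely because $\vc(\mH) < \infty$. Fix the target accuracy $\epsilon$ and confidence $\delta$. The learner draws a labeled sample $S$ of size $m_L = \tilde{O}(\frac{\vc(\H) + \log(1/\delta)}{\epsilon})$ from $P$ and an unlabeled sample $T$ of size $m_U = \tilde{O}(\frac{\vc(\mH) + \log(1/\delta)}{\epsilon})$ from $P_X$, and it outputs any hypothesis in the set
\[
V = \{\, h \in \H ~:~ \bLo{S}(h) = 0 \ \text{ and }\ T \cap \mar{\U}{h} = \emptyset \,\},
\]
where membership of a point $x \in T$ in $\mar{\U}{h}$ is decided directly from $h$ and $\U$ (a purely deterministic check, not a statistical one); if $V$ is empty, the learner outputs an arbitrary hypothesis.

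First I would argue that $V$ is nonempty with high probability. Robust realizability provides an $h^* \in \H$ with $\rLo{\U}{P}(h^*) = P(\err{h^*} \cup \mar{\U}{h^*}) = 0$, hence both $P(\err{h^*}) = 0$ and $P_X(\mar{\U}{h^*}) = 0$. Therefore, with probability $1$, no point of $S$ falls in $\err{h^*}$, so $\bLo{S}(h^*) = 0$, and no point of $T$ falls in $\mar{\U}{h^*}$; that is, $h^* \in V$.

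Next I would bound the robust loss of every $h \in V$. Since $\vc(\H)$ is finite and the distribution is (standard) realizable by $\H$, the realizable-case uniform convergence bound guarantees that, for $m_L$ as above, with probability at least $1 - \delta/2$ every $h$ with $\bLo{S}(h) = 0$ satisfies $\bLo{P}(h) \le \epsilon/2$. Analogously, applying the one-sided realizable VC bound to the class $\mH$ (which has finite VC-dimension by assumption) gives that, for $m_U$ as above, with probability at least $1 - \delta/2$ every $h$ with $T \cap \mar{\U}{h} = \emptyset$ satisfies $P_X(\mar{\U}{h}) \le \epsilon/2$. On the intersection of these two events (probability $\ge 1 - \delta$), any $h \in V$ satisfies
\[
\rLo{\U}{P}(h) = P(\err{h} \cup \mar{\U}{h}) \le P(\err{h}) + P_X(\mar{\U}{h}) \le \epsilon/2 + \epsilon/2 = \epsilon,
\]
and, since $h^* \in V$ on this event, the learner always returns such an $h$. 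Rescaling $\epsilon$ and $\delta$ by constant factors yields the claimed labeled and unlabeled sample complexities.

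The step I expect to be the main (though still routine) obstacle is the invocation of the one-sided realizable VC bound for $\mH$: it is exactly the statement that a hypothesis whose margin set avoids every unlabeled point must have small true margin weight, and it is the one place where $\vc(\mH) < \infty$ is indispensable --- without it, as the impossibility results for proper robust learning show, no bounded unlabeled sample can stand in for the margin oracle. A secondary point to keep track of is that $\mar{\U}{h}$, though formally a subset of $X \times Y$, does not depend on the label coordinate, so its $P$-weight equals the $P_X$-weight of its projection and the unlabeled sample indeed suffices to control it.
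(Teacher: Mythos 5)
Your proposal is correct and follows essentially the same route as the paper's proof: the paper likewise takes a labeled sample whose version space has true binary loss at most $\epsilon$ (an $\epsilon$-net for the error sets of $\H$) and an unlabeled sample acting as an $\epsilon$-net for the margin class $\mH$, then outputs any $h$ with zero error on $S$ and margin set missed by $T$, noting $h^*$ qualifies and bounding the robust loss by the sum of the two terms. Your version merely makes the $\epsilon$-net/realizable VC bounds and the $\epsilon/2$, $\delta/2$ bookkeeping explicit.
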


\begin{proof}
The stated sample sizes imply that all functions in $\H$ in the version space of the labeled sample $S$ have true binary loss at most $\epsilon$ and all functions in $\H$ whose margin areas are not hit by $T$ have true margin weight at most $\epsilon$. The learner can thus output any function $h$ with $0$ classification error on $S$ and $0$ margin weight under $T$ (at least $h^*$ will satisfy these conditions), and we get $\rLo{\U}{h} = P(\err{h} \cup \mar{\U}{h}) \leq 2\epsilon$. 
\end{proof}

The assumption in the above theorems states that there exists one function $h^*$ in the class that has both perfect classification accuracy and no weight in its margin area. The proof of the impossibility construction of Theorem \ref{thm:stat_impossibility} employs a class and distributions where no function in the class has perfect margin or perfectly classifies the task. We can modify that construction to
show that the ``double realizability'' in Theorem \ref{thm:robust_realizable} is necessary if the access to the marginal should be restricted to a margin oracle for $\H$. The proof of the follwing result can be found in Appendix \ref{app:proofs}.

\begin{theorem}\label{thm:double}
There is a class $\H$ with $\vc(\H) = 1$ over a domain $X$ with $|X| = 8$, a perturbation type $\U:X \to 2^X$, and two distributions $P^1$ and $P^2$ over $X\times \{0,1\}$, such that there are functions $h_r, h_c \in \H$  with 
$\bLo{P^i}(h_r) = 0$ and $P^i(\mar{\U}{h_c}) = 0$
for both $i\in \{1,2\}$, while $P^1$ and $P^2$
are indistinguishable with error and margin oracles for $\H$ and their robust loss minimizers in $\H$ differ.
\end{theorem}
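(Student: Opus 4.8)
The plan is to modify the seven-point construction from the proof of Theorem~\ref{thm:stat_impossibility} by adding one new domain point $x_8$ and two new hypotheses to $\H$, so that the class now exhibits separate witnesses for the two components of the robust loss, while retaining the ``indistinguishable-but-different-minimizers'' phenomenon. Keep the two perturbation chains $C_1=\{x_1,x_2,x_3\}$ and $C_2=\{x_4,x_5,x_6\}$ exactly as before, but now place $x_7$ and $x_8$ together into a single two-point perturbation component (that is, $x_8\in\U(x_7)$, $x_7\in\U(x_8)$, each point still in its own perturbation set); let the true label be $0$ everywhere, and give $x_8$ probability weight $0$ under both marginals. The map $\sigma$ swapping $C_1\leftrightarrow C_2$ via $x_i\leftrightarrow x_{i+3}$ and fixing $x_7,x_8$ is an automorphism of the domain/perturbation structure.

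Take $\H=\{h_1,h_2,h_r,h_c\}$ with $h_1=\indct{x\in\{x_2,x_3\}}$ and $h_2=\indct{x\in\{x_5,x_6\}}$ as in Theorem~\ref{thm:stat_impossibility}, together with $h_r=\indct{x=x_8}$ and $h_c=\indct{x\in\{x_7,x_8\}}$. The key points are: (i) $h_c$ is constant on every perturbation component, hence $\mar{\U}{h_c}=\emptyset$ and $P^i(\mar{\U}{h_c})=0$ for both $i$; (ii) $h_r$ differs from the all-zero true labeling only at the null point $x_8$, so $\bLo{P^i}(h_r)=0$, yet $h_r$ has a decision boundary across the edge $\{x_7,x_8\}$, so $\mar{\U}{h_r}=\{x_7,x_8\}$ and $h_r$ carries positive robust loss $P^i_X(x_7)$; (iii) the only $2$-subsets contained in a member of $\H$ (viewing hypotheses as subsets of $X$) are $\{x_2,x_3\}$, $\{x_5,x_6\}$, $\{x_7,x_8\}$, and a direct trace check shows none of these three receives all four subsets, so $\vc(\H)=1$. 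One also checks that no hypothesis in $\H$ has zero robust loss under $P^i$, so ``double realizability'' genuinely fails (which is the point of the theorem, cf.\ Theorem~\ref{thm:robust_realizable}).

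It remains to choose $P^1_X$, with $P^2_X$ its image under $\sigma$. Since the error and margin weights of $h_r$ and $h_c$ meet the domain only at the $\sigma$-fixed points $x_7,x_8$, their oracle values are automatically $\sigma$-symmetric; the only constraints are that $h_1$ (and hence, by symmetry, $h_2$) have matching error- and margin-oracle values, i.e.\ $P^1_X(\{x_2,x_3\})=P^2_X(\{x_2,x_3\})$ and $P^1_X(\{x_1,x_2\})=P^2_X(\{x_1,x_2\})$. The robust losses are the weights of $\{x_1,x_2,x_3\}$, $\{x_4,x_5,x_6\}$ and $\{x_7,x_8\}$ for $h_1$, $h_2$, and $\{h_r,h_c\}$ respectively, so to make $h_1$ the unique robust minimizer under $P^1$ (and $h_2$ under $P^2$) it suffices that the $\{x_1,x_2,x_3\}$-mass be strictly below both the $x_7$-mass and the $\{x_4,x_5,x_6\}$-mass. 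An explicit choice is $P^1_X=(0,2,1,2,0,3,4,0)/12$ on $(x_1,\dots,x_8)$; then the two oracle equalities and the strict inequality $3/12<4/12$ are immediate, and $h_1$ versus $h_2$ are the unique robust minimizers under $P^1$ versus $P^2$.

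The main obstacle is the rigidity caused by the two oracle-equality constraints together with the requirement that $h_c$ also be indistinguishable: its error-oracle value is $P^i_X(\{x_7,x_8\})$, which is $\sigma$-invariant and hence harmless, but had one instead tried to support $h_c$ on a chain (say $C_1$), its error weight would be $P^i_X(C_1)$, and asking this to be $\sigma$-invariant on top of the other two equalities would force $P^1_X=P^2_X$ and collapse the construction. This is exactly why $h_c$ must live on the $\sigma$-fixed component $\{x_7,x_8\}$. The dual subtlety is that $h_r$ must avoid classification error (arranged by making $x_8$ a null point) while \emph{not} having zero margin weight---otherwise $h_r$ would be a robust-loss minimizer for \emph{both} distributions and the minimizers would coincide---which forces the new point $x_8$ to be adjacent to the positively weighted point $x_7$ rather than isolated.
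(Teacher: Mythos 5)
Your construction is correct and follows the same skeleton as the paper's proof: add a probability-zero point $x_8$ whose perturbation set is tied to $x_7$ (the paper likewise sets $\U(x_7)=\U(x_8)=\{x_7,x_8\}$), take $h_r=\indct{x=x_8}$ as the $0/1$-realizability witness, add a margin-free hypothesis $h_c$, and exploit the swap symmetry of the two chains so that all error- and margin-oracle values agree under $P^1$ and $P^2$ while the robust minimizers remain $h_1$ versus $h_2$. I checked your explicit weights $(0,2,1,2,0,3,4,0)/12$: all four pairs of oracle values match across the two distributions, $\bLo{P^i}(h_r)=0$, $\mar{\U}{h_c}=\emptyset$, and the robust losses are $3/12,5/12,4/12,4/12$ under $P^1$ (mirrored under $P^2$), so the minimizers differ as required. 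The one substantive divergence is your choice $h_c=\indct{x\in\{x_7,x_8\}}$ where the paper uses the constant-$1$ function, and this buys you something real: with the constant-$1$ function the pair $\{x_2,x_5\}$ is shattered (traces $\emptyset,\{x_2\},\{x_5\},\{x_2,x_5\}$ from $h_r,h_1,h_2,h_c$), so the paper's class as written has VC-dimension $2$, whereas your trace check correctly shows your class keeps $\vc(\H)=1$ as the theorem statement demands; the only extra obligation this creates, namely that $P^i(\err{h_c})=P^i_X(\{x_7,x_8\})$ be equal under both distributions, is automatic since $\{x_7,x_8\}$ is fixed by the symmetry, as you note.
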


\section{Black-box Certification and the Query Complexity of Adversarial Attacks}\label{s:query}

Given a fixed hypothesis $h$, a basic concentration inequality (e.g., Hoeffding's inequality) indicates that the empirical loss of $h$ on a samples $S\sim P^m$, $\bLo{S}(h)$, gives an $O(m^{-1/2})$-accurate estimate of the true loss with respect to $P$, $\bLo{P}(h)$. In fact, in order to compute $\bLo{S}(h)$, we do not need to know $h$ directly; it would suffice to be able to query $h(x)$ on the given sample. Therefore, we can say it is possible to estimate the true binary loss of $h$ up to additive error $\epsilon$ using $O(1/{\epsilon^2})$ samples from $P$ and $O(1/{\epsilon^2})$ queries to $h(.)$. 

The high-level question that we ask in this section is whether and when we can do the same for the adversarial loss, $\rLo{\U}{P}(h)$. If possible, it would mean that we can have a third-party that ``certifies'' the robustness of a given black-box predictor (e.g., without relying on the knowledge of the learning algorithm that produced it)

\begin{definition}[Label Query Oracle]
We call an oracle $\Ocal_h$ a \emph{label query oracle for a hypothesis $h$}, if for all $x\in X$, upon querying for $x$, the oracle returns the label $\Ocal_h(x) = h(x)$.
\end{definition}

\begin{definition}[Query-based Algorithm]
We call an algorithm $\A: (\bigcup_{i=1}^{\infty} X^i, \Ocal_h) \to \reals$ a \emph{query-based algorithm}, if $\A$ has access to a label query oracle $\Ocal_h$.
\end{definition}

\begin{definition}[Certifiablility]\label{def:certification}
A class $\H$ is \emph{certifiable} with respect to $\U$ if there exists a query based algorithm $\A$ and there are functions $q,m: (0,1)^2 \to\naturals$ such that for every $\epsilon, \delta \in (0,1]$, every distribution $P$ over $X\times Y$, and every $h\in \H$, we have that with probability at least $1-\delta$ over an iid sample $S\sim P_X^m$ of size $m\geq m(\epsilon, \delta)$ 
\[
|\A(S, \Ocal_h) ~-~ \rLo{\U}{P}(h)|~<~\epsilon
\]
with a query budget of $q(\epsilon,\delta)$ for $\A$. In this case, we say that \emph{$\H$ admits $(m,q)$ blackbox query certification}.
\end{definition}

In light of Section \ref{sec:decompose}, the task of robust certification is to estimate the probability weight of the set $\err{h}\cup\mar{\U}{h}$.

\begin{observation}
\label{obs:half-spaces}
Let $\H$ be the set of all half-spaces in $\mathbb{R}^2$ and let $\U(x) = \{z: \|x-z\|_1\leq 1\}$ be the unit ball wrt $\ell_1$-norm centred at $x$. 
Then $\H$ admits $(m, q)$-certification under $\U$ for functions $m,q \in O(1/\epsilon^2)$.
\end{observation}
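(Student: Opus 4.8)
The plan is to reduce certification of a single given half-space $h$ to (i) a query-cheap, \emph{exact} computation of the empirical robust loss $\rLo{\U}{S}(h)$ on an i.i.d.\ sample, followed by (ii) a routine Hoeffding bound; the only place where the specific class and perturbation type matter is step (i). Recall from Section~\ref{sec:decompose} that $\rLo{\U}{P}(h) = P(\err{h}\cup\mar{\U}{h})$, and that for a labeled point $(x,y)$ we have $\rlo{\U}(h,x,y) = 1$ precisely when $h(x)\neq y$ or $x \in \mar{\U}{h}$ (the margin area does not depend on $y$, so projecting it to $X$ is harmless). Since $h$ is accessed only through $\Ocal_h$, the first disjunct costs a single query once $y$ is known, so the crux is deciding membership in $\mar{\U}{h}$ with a bounded number of queries.

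For this I would use the geometry of $\ell_1$-balls and half-spaces. In $\reals^2$ the set $\U(x) = \B_1(x) = \{z : \|z-x\|_1 \le 1\}$ is a square, equal to the convex hull of its four vertices $v^{(1)}(x)=x+e_1$, $v^{(2)}(x)=x-e_1$, $v^{(3)}(x)=x+e_2$, $v^{(4)}(x)=x-e_2$. For a half-space $h$ with any fixed tie-breaking convention on its boundary line, both preimages $h^{-1}(0)$ and $h^{-1}(1)$ are convex (a closed and the complementary open half-plane). Hence $\B_1(x)$ is contained in a single decision region iff all four vertices $v^{(j)}(x)$ are, i.e.\ iff $h(v^{(1)}(x))=\dots=h(v^{(4)}(x))$, in which case $h(x)$ also equals this common value because $x\in\mathrm{conv}\{v^{(j)}(x)\}$. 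I would therefore claim: $x\in\mar{\U}{h}$ if and only if the four queried labels $h(v^{(j)}(x))$ are not all equal; the degenerate cases (the boundary line containing or supporting a vertex or an edge of the square) are handled correctly by this test once the tie-breaking rule is fixed, and checking this is the one slightly fiddly point of the argument.

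Given this, the certifier $\A$ would be: draw the i.i.d.\ sample $S$ from $P$; for each sampled $x_i$ (with label $y_i$), query $h$ at the four vertices $v^{(j)}(x_i)$ --- at most $4$ queries --- and set $\ell_i = 1$ unless all four returned labels equal $y_i$; by the previous paragraph $\ell_i = \rlo{\U}(h,x_i,y_i)$, so the output $\hat L = \frac1m\sum_i \ell_i$ equals $\rLo{\U}{S}(h)$ exactly. The $\ell_i$ are i.i.d.\ $\{0,1\}$-valued with mean $\rLo{\U}{P}(h)$, so Hoeffding's inequality gives $\Pr[|\hat L - \rLo{\U}{P}(h)|\ge\epsilon] \le 2\mathrm{e}^{-2m\epsilon^2}$; taking $m(\epsilon,\delta)=\lceil\frac{1}{2\epsilon^2}\ln\frac2\delta\rceil$ and query budget $q(\epsilon,\delta)=4\,m(\epsilon,\delta)$ --- both in $O(\epsilon^{-2})$ for fixed $\delta$ --- yields $(m,q)$ black-box query certification.

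I expect the only real obstacle to be the margin-membership test, and the content of the proof is that it is easy here for a structural reason: the $\ell_1$-ball is a polytope whose $O(1)$ vertices are explicit affine functions of the centre, and a polytope meets the boundary of a half-space iff one of its vertices does (by convexity of the two decision regions). No uniform-convergence or VC argument is needed, since we certify a single, fixed hypothesis rather than learn one. The same recipe extends immediately --- with $2d$, respectively $2^d$, queries per sample point --- to half-spaces in $\reals^d$ under $\ell_1$- or $\ell_\infty$-balls in fixed dimension; it breaks down exactly when the perturbation set fails to be a polytope (e.g.\ a Euclidean ball), which is precisely what makes the later impossibility-type phenomena possible.
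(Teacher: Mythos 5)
Your proof is correct and follows essentially the same route as the paper's: draw an i.i.d.\ sample, query the four corner points of each sampled point's $\ell_1$-ball to decide the robust loss exactly on that point, and finish with a Hoeffding bound, giving $m,q\in O(1/\epsilon^2)$. The only (harmless) difference is that the paper also queries $x$ itself to test $x\in\err{h}$ (five queries per point), whereas your convexity argument shows the four vertices alone already determine $\rlo{\U}(h,x,y)$, and you spell out explicitly why the corners suffice, which the paper leaves implicit.
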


\begin{proof}
Say we have a sample $S\sim P_X^m$. For each point $x\in S$ define the set $w(x) = \{x+(0, 1), x+(1,0), x+(-1,0), x+(0,-1)\}$, i.e., the four corner points of $\U(x)$. The certifier can determine whether $x\in\err{h}$ by querying the label of $x$; further it can determine whether $x\in\mar{\U}{h}$ by querying all points in $w(x)$. Let $W = \cup_{x\in S} w(x)$. By querying all points in $S\cup W$, the certifier can calculate the robust loss of $h$ on $S$. This will be an $\epsilon$-accurate estimate of $L^\U_P(h)$ when $m = O(1/\epsilon^2)$.
\end{proof}

We immediately see that certification is non-trivial, in that there are cases where robust certification is impossible. The proof can be found in Appendix \ref{app:query_proofs}.

\begin{observation}
\label{obs:lower-bound}
Let $\H$ be the set of all half-spaces in $\mathbb{R}^2$ and let $\U(x) = \{z: \|x-z\|_2\leq 1\}$ be the unit ball wrt $\ell_2$-norm centred at $x$. Then $\H$ is not certifiable under $\U$.
\end{observation}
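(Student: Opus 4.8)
The plan is to exhibit a family of hard instances on which any certifier using a bounded number of queries would have to recover a single hidden bit, and then to show that this bit is buried in an arbitrarily thin strip whose orientation the certifier cannot locate. Concretely, I would fix $\epsilon=\delta=1/4$, write $q:=q(\epsilon,\delta)$ for the query budget of the hypothetical certifier $\A$, and take $P$ to be the point mass on $(x_0,1)$ with $x_0=(0,0)$, so that the sample $S\sim P_X^m$ is completely uninformative. For a unit vector $u\in\reals^2$ and a sign $\sigma\in\{-1,+1\}$, consider the halfspace $h_u^\sigma=\{x:\langle u,x\rangle\ge -1+\sigma\gamma\}$, where $\gamma\in(0,1)$ is tiny and chosen last. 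The first step is a short geometric fact: for a halfspace $h$ and $\U(x)=\B_1(x)$, the margin set $\mar{\U}{h}$ is, up to its boundary, the strip of points at Euclidean distance at most $1$ from the boundary line of $h$. Since $h_u^\sigma(x_0)=1$ matches the true label and $x_0$ sits at distance $1-\sigma\gamma$ from the boundary of $h_u^\sigma$, we get $\rLo{\U}{P}(h_u^\sigma)=P(\mar{\U}{h_u^\sigma})=\indct{\sigma=+1}$. Hence an $\epsilon$-accurate certifier with $\epsilon<1/2$ applied to $(P,h_u^\sigma)$ must determine $\sigma$.

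The second step hides the direction $u$. The halfspaces $h_u^{+1}$ and $h_u^{-1}$ differ only on the strip $\Sigma_u:=\{z:\langle u,z\rangle\in[-1-\gamma,-1+\gamma)\}$, and off this strip both coincide with $h_u^0:=\{x:\langle u,x\rangle\ge -1\}$. So if $\A$, run against the oracle for $h_u^0$, never queries a point in $\Sigma_u$, then it produces an identical transcript, and hence identical output, against $h_u^{+1}$, $h_u^0$, and $h_u^{-1}$ --- impossible, since its outputs for $\sigma=\pm 1$ must differ. It thus suffices to find one direction $u$ whose at most $q$ queries all miss $\Sigma_u$, and I would get one by averaging over $u=u_\theta=(\cos\theta,\sin\theta)$ with $\theta$ uniform on $[0,2\pi)$. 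Two estimates enter: (a) for any fixed point $z$, the set of $\theta$ with $z\in\Sigma_{u_\theta}$ has measure $O(\sqrt\gamma)$, uniformly in $z$ --- the square-root (rather than linear) dependence appears because $\theta\mapsto\langle u_\theta,z\rangle$ is flat near its extremum, the relevant regime when $\|z\|\approx 1$; and (b) since $\A$ is adaptive its $i$-th query $z_i$ is a function of $\theta$, but against the \emph{fixed} oracle $h_u^0$ this function is piecewise constant with only $2^{O(i)}$ pieces, because each earlier answer is a threshold indicator $\indct{\langle u_\theta,z_j\rangle\ge -1}$ which, on a piece where $z_j$ is constant, flips at most twice. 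Multiplying (a) and (b) and union-bounding over $i\le q$, the set of bad directions has measure $2^{O(q)}\sqrt\gamma$; choosing $\gamma$ small enough (depending only on the fixed number $q$) makes this $<2\pi$, producing a good direction and the contradiction.

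I expect step (b) --- coping with adaptivity --- to be the main obstacle. If the certifier's queries were fixed in advance the argument would be immediate: a single point lands in $\Sigma_{u_\theta}$ for only an $O(\sqrt\gamma)$-fraction of directions, so $q$ queries miss all but an $O(q\sqrt\gamma)$-fraction. The subtlety is that later queries may react to the oracle's answers, which leak information about $u$; the resolution is that, as long as no query has yet entered the ambiguous strip, the answers seen so far are exactly those produced by the $\gamma=0$ halfspace $h_u^0$, so the certifier's state is a function of $u$ alone with a controlled number of branches --- which the piecewise-constant count formalizes. An equivalent route I would also consider is a direct online adversary: maintain a set $\mathcal{F}\subseteq[0,2\pi)$ of directions for which both $h_u^{\pm 1}$ are still consistent with the answers given, answer each query so as to keep at least half of $\mathcal{F}$ minus an $O(\sqrt\gamma)$ loss, and note that after $q$ rounds $\mathcal{F}$ still has positive measure for $\gamma$ small enough, so some direction with both $h_u^{\pm 1}$ consistent survives and defeats $\A$. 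The conceptual takeaway --- in contrast to the polytopal $\ell_1$ case of Observation~\ref{obs:half-spaces}, where querying the finitely many vertices of each perturbation ball suffices --- is that a round perturbation region turns the margin area of a linear classifier into a strip whose orientation is invisible to black-box label queries.
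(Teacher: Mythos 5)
Your proof is correct, but it takes a genuinely different route from the paper's. The paper also reduces to a point mass at the origin with label $1$ (so the certifier must decide whether the decision boundary of the hidden half-space meets the unit ball), but then passes to the dual plane via the standard point--line duality: each label query becomes a side-of-line query about a hidden dual point, and an adaptive adversary maintains, by induction on the query arrangement, a full-dimensional cell that still crosses the unit circle, so that after any finite number of queries both verdicts remain consistent and the verdict can be defeated. You instead work inside an explicit one-parameter family of near-tangent half-spaces $h_{u}^{\pm 1}$ whose robust losses are $0$ and $1$ but which differ only on a width-$2\gamma$ strip, and you locate an undetectable direction by an averaging argument, handling adaptivity through the $2^{O(q)}$ branch count of the transcript as a function of the angle together with the uniform $O(\sqrt{\gamma})$ arc estimate. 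Each approach buys something: the paper's dual-arrangement argument is shorter and avoids any quantitative estimates, while yours is more quantitative (an explicit trade-off between the query budget $q$ and the tolerance $\gamma$, which is why essentially the same construction underlies the tolerant-certification discussion around Observations~\ref{obs:tolerant} and~\ref{obs:tolerant_impossibility}), and your Fubini-style averaging over directions extends with little extra work to randomized certifiers, whereas the paper explicitly proves only the deterministic case. One small caveat: as written, your step ``identical transcript, hence identical output'' also presumes a deterministic certifier, so if you want the stronger randomized statement you should make the extra averaging over the algorithm's randomness (and the coupling of the runs against $h_u^{+1}$, $h_u^{0}$, $h_u^{-1}$ with a common seed) explicit; your alternative ``online adversary over surviving directions'' is essentially the paper's strategy restricted to your one-parameter family and is likewise inherently deterministic-only.
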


This motivates us to define a \emph{tolerant certification} version.
\begin{definition}[Restriction of a perturbation type]
Let $\U, \V: X\rightarrow 2^X$ be a perturbation types. We say that \emph{$\U$ is a restriction of $\V$} if $\U(x)\subseteq\V(x)$ for all $x\in X$.
\end{definition}

Note that if $\U$ is a restriction of $\V$, then, for all distributions $P$ and predictors $h$ we have $\rLo{\U}{P} \leq \rLo{\V}{P}$.

\begin{definition}[Tolerant Certification]\label{def:Tolerant}
A class $\H$ is \emph{tolerantly certifiable} with respect to $\U$ and $\V$, where $\U$ is a restriction of $\V$, if there exists a query based algorithm $\A$, and there are functions $q,m: (0,1)^2 \to\naturals$ such that for every $\epsilon, \delta \in (0,1]$, every distribution $P$ over $X \times Y$, and every $h\in \H$, we have that with probability at least $1-\delta$ over an \iid sample $S\sim P_X^m$ of size $m\geq m(\epsilon, \delta)$ 
\[
\A(S, \Ocal_h) ~\in~[\rLo{\U}{P}(h)-\epsilon, \rLo{\V}{P}(h)+\epsilon]
\]
with a query budget of $q(\epsilon,\delta)$ for $\A$. In this case, we say that \emph{$\H$ admits tolerant $(m,q)$ blackbox query certification}.
\end{definition}

\begin{observation}\label{obs:tolerant}
Let $\U(x) = \{z: \|x-z\|_2\leq 1\}$ and $\V(x) = \{z: \|x-z\|_2\leq 1+\gamma\}$. Let $\H$ be the set of all half-spaces in $\mathbb{R}^2$. Then $\H$ is $(O(1/\epsilon^2), O(1/\sqrt{\gamma}\epsilon^2))$ tolerantly certifiable with respect to $\U$ and $\V$.
\end{observation}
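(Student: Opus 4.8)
The plan is to imitate the certifier of Observation~\ref{obs:half-spaces}, but to replace the exact test ``does the decision line of $h$ pass within $\ell_2$-distance $1$ of the sampled point $x$?'' --- which needs infinitely many queries once the perturbation ball is round --- by a finite, \emph{one-sided} test, and to pay for the curvature of the ball with a $\Theta(1/\sqrt{\gamma})$ increase in the per-point query count. Concretely: I would draw an i.i.d.\ sample $S$ of size $m$; for each sampled point $(x,y)$ the certifier queries $h(x)$, uses the label $y$ (exactly as $\err{h}$ is handled in Observation~\ref{obs:half-spaces}), and additionally queries $h$ at $k=\Theta(1/\sqrt{\gamma})$ equally spaced points $p_1,\dots,p_k$ on the circle of radius $1+\gamma/2$ centred at $x$. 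It then sets a bit $\widehat\ell(x,y)=1$ precisely when $h(x)\neq y$ or $h(p_j)\neq h(x)$ for some $j$, and outputs $\frac{1}{m}\sum_{(x,y)\in S}\widehat\ell(x,y)$.

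The crux is the pointwise sandwich $\rlo{\U}(h,x,y)\le\widehat\ell(x,y)\le\rlo{\V}(h,x,y)$. The upper bound is routine: a point $p_j$ with $h(p_j)\neq h(x)$ lies in $\V(x)$ (since $1+\gamma/2\le 1+\gamma$) and therefore witnesses $x\in\mar{\V}{h}$, while $h(x)\neq y$ trivially forces $\rlo{\V}=1$. The substance is in the lower bound: I must show that if $x\in\mar{\U}{h}$ then some $p_j$ carries the opposite label of $h(x)$ (the case $h(x)\neq y$ being trivial). When $x\in\mar{\U}{h}$ the decision line lies at distance $t\le 1<1+\gamma/2$ from $x$, so it cuts the radius-$(1+\gamma/2)$ circle transversally into two arcs; the arc lying on the opposite side of the line from $x$ consists entirely of points with $h$-label $\neq h(x)$, and has angular width $2\arccos\!\big(t/(1+\gamma/2)\big)\ge 2\arccos\!\big(1/(1+\gamma/2)\big)$. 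Using the elementary inequality $\arccos(1-u)>\sqrt{u}$, this width is $\Omega(\sqrt{\gamma})$, so choosing the net spacing $2\pi/k$ strictly below it --- which needs only $k=\Theta(1/\sqrt{\gamma})$ points --- guarantees some $p_j$ in that arc.

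With the sandwich in hand, the rest is a single-hypothesis concentration argument (note that $h$ is fixed before the sample is drawn, so no uniform-convergence machinery is needed): $\widehat\ell(\cdot,\cdot)\in[0,1]$ and its expectation lies in $[\rLo{\U}{P}(h),\rLo{\V}{P}(h)]$, so Hoeffding's inequality with $m=O(\log(1/\delta)/\epsilon^2)$ samples places $\frac1m\sum\widehat\ell$ inside $[\rLo{\U}{P}(h)-\epsilon,\rLo{\V}{P}(h)+\epsilon]$ with probability at least $1-\delta$. The total query budget is $m\cdot(k+O(1))=O\big(1/(\sqrt{\gamma}\,\epsilon^2)\big)$, matching the claim.

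I expect the one genuinely non-routine step to be the geometric estimate: that the ``visible'' arc on the circle of radius $1+\gamma/2$ has width $\Theta(\sqrt{\gamma})$ rather than $\Theta(\gamma)$ --- a chord cutting the (nearly) unit circle at depth $\Theta(\gamma)$ from its boundary subtends angle $\Theta(\sqrt{\gamma})$ --- since this curvature effect is exactly what produces the $1/\sqrt{\gamma}$ factor and is also the reason one must probe slightly \emph{outside} the radius-$1$ ball: a net on the radius-$1$ circle would fail to detect lines passing at distance close to $1$ and so violate the lower half of the sandwich, whereas probing at radius $1+\gamma/2$ can over-report only up to the $\V$-loss. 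Minor additional points to dispatch are the degenerate configurations (the line passing through $x$, or $x$ on the positive side of the line) and the measurability of the relevant loss functions, but these follow directly from the definitions in Section~\ref{s:setup}.
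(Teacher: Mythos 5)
Your proposal is correct and follows essentially the same route as the paper: query $\Theta(1/\sqrt{\gamma})$ points per sampled instance placed in the annulus between $\U(x)$ and $\V(x)$ (your equally spaced points at radius $1+\gamma/2$ are, in effect, the vertices of the regular polygon the paper sandwiches between the two balls), establish the pointwise sandwich $\rlo{\U}\le\widehat{\ell}\le\rlo{\V}$, and finish with Hoeffding on $O(1/\epsilon^2)$ samples. Your explicit arc-width estimate $\Theta(\sqrt{\gamma})$ is exactly the quantitative fact behind the paper's $O(\pi/\sqrt{\gamma})$-vertex polygon, so the two arguments coincide in substance.
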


\begin{proof}[Proof sketch]
For each $x$, we can always find a regular polygon with $O(\pi/\sqrt{\gamma})$ vertices that ``sits'' between the $\U(x)$ and $\V(x)$. Therefore, in order to find out whether $x$ is adversarially vulnerable or not, it would suffice to make $O(\pi/\sqrt{\gamma})$ queries. Combining this with Hoeffding's inequality shows that if we sample $1/\epsilon^2$ points from $P$ and make $O(\pi/\sqrt{\gamma})$ queries for each, we can estimate $\rLo{\U,\V}{P}(h)$ within error $\epsilon$.
\end{proof}

Though more realistic, even the tolerant notion of certifiability does not make all seemingly simple classes certifiable.

\begin{observation}\label{obs:tolerant_impossibility}
Let $\U(x) = \{z: \|x-z\|^2\leq 1\}$ and $\V(x) = \{z: \|x-z\|^2\leq 1+\gamma\}$. There exists a hypothesis class $\H$ with VC-dimension 1, such that $\H$ is not tolerantly certifiable with respect to $\U$ and $\V$.
\end{observation}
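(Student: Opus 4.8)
The plan is to build a ``singletons along a ray'' class together with a point-mass distribution for which the single adversarially vulnerable point can always be hidden from a bounded-query certifier. Take $X=\reals^2$, so that (with $\d$ the Euclidean metric) $\U(x)=\B_1(x)$ and $\V(x)=\B_{\sqrt{1+\gamma}}(x)$, and let $\H=\{h_c : c>0\}$ where $h_c=\indct{x=(c,0)}$ is the indicator of the single point $(c,0)$. Since no pair of points can be assigned the pattern $(1,1)$ by any singleton, $\vc(\H)=1$. I would use the point mass $P$ on $((0,0),0)$, so that every sample $S\sim P_X^m$ is just a constant sequence of copies of $(0,0)$ and carries no information.

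First I would record the two relevant hypotheses. For $c\in(0,1]$ we have $(c,0)\in\U((0,0))$ while $h_c((0,0))=0\neq 1=h_c((c,0))$, so $((0,0),0)\in\mar{\U}{h_c}$ and hence $\rLo{\U}{P}(h_c)=1$ (and a fortiori $\rLo{\V}{P}(h_c)=1$). For $c>\sqrt{1+\gamma}$ we have $(c,0)\notin\V((0,0))$, so $h_c$ is constant on $\V((0,0))$ and $\rLo{\V}{P}(h_c)=\rLo{\U}{P}(h_c)=0$. Consequently any tolerant certifier, when handed $\Ocal_{h_c}$ with $c\in(0,1]$, must output a number $\geq 1-\epsilon$ with probability $\geq 1-\delta$, while when handed $\Ocal_{h_c}$ with $c>\sqrt{1+\gamma}$ it must output a number $\leq\epsilon$ with probability $\geq 1-\delta$.

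Next I would exploit that $\Ocal_{h_c}$ answers $0$ to every query except the single point $(c,0)$. Fix a query-based algorithm $\A$ with query budget $q$ and run it on the constant sample $S$ and on the (analytically convenient, though not in $\H$) trivial oracle $\Ocal_{\mathbf 0}\equiv 0$; let $Q(\omega)$ be the random set, of size $\leq q$, of points it queries, as a function of its internal randomness $\omega$. On every $\omega$ with $(c,0)\notin Q(\omega)$, the run of $\A$ on $\Ocal_{h_c}$ is identical to its run on $\Ocal_{\mathbf 0}$: the answers seen are the same and the algorithm's subsequent queries and final output are functions of the transcript only. Since $Q(\omega)$ is finite, $\int_{(0,1]}\Pr_\omega[(c,0)\in Q(\omega)]\,dc = \Ex_\omega[\mathrm{Leb}\{c\in(0,1] : (c,0)\in Q(\omega)\}]=0$ by Tonelli, so there is $c_1\in(0,1]$ with $\Pr_\omega[(c_1,0)\in Q(\omega)]=0$; similarly there is $c_2\in(\sqrt{1+\gamma},\sqrt{1+\gamma}+1]$ with $\Pr_\omega[(c_2,0)\in Q(\omega)]=0$. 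Then $\A$ has the very same output distribution on $\Ocal_{h_{c_1}}$ and on $\Ocal_{h_{c_2}}$, and by the previous paragraph that distribution would have to put mass $\geq 1-\delta$ on $[1-\epsilon,\infty)$ and also mass $\geq 1-\delta$ on $(-\infty,\epsilon]$. For $\epsilon<\tfrac12$ these events are disjoint, forcing $\delta\geq\tfrac12$; taking e.g.\ $\epsilon=\delta=\tfrac14$ in Definition~\ref{def:Tolerant} then rules out any such $\A$, so $\H$ is not tolerantly certifiable with respect to $\U$ and $\V$.

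The step I expect to be the main obstacle is making this ``coupling with the all-zero oracle'' argument watertight for adaptive and randomized certifiers: one must check carefully that the entire transcript (hence the output) is unchanged as long as $(c,0)$ is never queried, and handle the measurability of $Q(\omega)$ needed for the Tonelli step. For a deterministic $\A$ the argument collapses to a plain pigeonhole ($q+1$ candidate values of $c$ in each of the two ranges cannot all be queried with only $q$ queries), which I would present first as a sanity check.
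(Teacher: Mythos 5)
Your proposal is correct and takes essentially the same route as the paper: the paper's proof also uses the class of singleton indicators $h_p=\indct{x=p}$ and argues that a certifier with a bounded query budget must leave some candidate location unqueried, which can then be placed inside $\U$ of the support point or outside $\V$ to contradict its verdict. The only addition on your side is the coupling/Tonelli step that extends the argument to randomized, adaptive certifiers, whereas the paper's template argument (as in its Observation~\ref{obs:lower-bound}) is carried out only for deterministic ones.
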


\begin{proof}[Proof sketch]
For any $p\in\mathbb{R}^2$, let $h_p(x) = \indct{x = p}$. Let $\H = \{h_p: p\in\mathbb{R}^2\}$. $\H$ clearly has a VC-dimension of 1, but we claim that it is not tolerantly certifiable. We construct an argument similar to Observation~\ref{obs:lower-bound}. The idea is that no matter what queries the certifier chooses, we can always set $p$ to be a point that was not queried and is either inside $\U$ or outside $\V$ depending on the certifier's answer.
\end{proof}

\subsection{Witness Sets for Certification}

A common observation in the previous examples was that if the certifier could identify a set of points whose labels determined the points in $S$ that were in the margin of $h$, then querying those points was enough for robust certification. This motivates the following definition.

\begin{definition}[Witness sets]\label{def:wintness}
Given a hypothesis class $\H$ and a perturbation type $\U$, for any point $x\in X$, we say that $w(x)\subset X$ is a \emph{witness set} for $x$ if there exists a mapping $f: \{0, 1\}^{w(x)}\rightarrow\{0,1\}$ such that for any hypothesis $h\in \H$, $f(h|_{w(x)})=1$  if and only if $x$ lies in the margin of $h$ (where $h|_{w(x)}$ denotes the restriction of hypothesis $h$ to set $w(x)$).
\end{definition}

Clearly, all positive examples above were created using witness sets. The following theorem identifies a large class of $\H, \U$ pairs that exhibit finite witness sets.

\begin{theorem}\label{thm:partial_order}
For any $x\in X$, consider two partial orderings $\prec_0^x$ and $\prec_1^x$ over the elements of $\H$ where for $h_1, h_2\in \H$, we say $h_1\prec_1^x h_2$ if $\U(x)\cap h_1\subset\U(x)\cap h_2$, and $h_1\prec_0^x h_2$ if $\U(x)\setminus h_1\subset\U(x)\setminus h_2$. For both partial orderings we identify (as equivalent) hypotheses where these intersections co-incide and further we remove all hypotheses where the intersections are empty.
\footnote{Here, we think of hypotheses $h_1$ and $h_2$ as the pre-image of 1 (as noted in Section~\ref{s:setup}), and hence subsets of $X$.}
If both partial orders have finite number of minima for each $x$, then the pair $\H, \U$ exhibits a finite witness set and hence is certifiable.
\end{theorem}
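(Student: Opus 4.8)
The plan is to construct, for each fixed $x \in X$, a finite witness set $w(x)$ together with the decision function $f$ from the hypotheses achieving the minima of the two partial orders. First I would fix $x$ and look at the collection of ``traces'' $\{\U(x) \cap h : h \in \H\}$ and $\{\U(x) \setminus h : h \in \H\}$; recall that $x$ lies in the margin of $h$ precisely when $\U(x) \cap h$ and $\U(x) \setminus h$ are both nonempty, i.e.\ $h$ does not label all of $\U(x)$ the same way. After quotienting by the stated equivalence and discarding the empty traces (which correspond exactly to the hypotheses that are constant on $\U(x)$ in the relevant direction), the hypothesis $h$ is in the margin of $x$ iff it survives \emph{both} prunings, i.e.\ iff $\U(x)\cap h \neq \emptyset$ \emph{and} $\U(x) \setminus h \neq \emptyset$.

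The key observation is a minimality/witness argument: if $h$ has a nonempty trace $\U(x)\cap h$, then since $\prec_1^x$ has finitely many minima, $\U(x)\cap h$ contains $\U(x)\cap h'$ for some minimal $h'$; in particular it contains at least one point from the ``core'' consisting of one representative point chosen from the trace of each $\prec_1^x$-minimal hypothesis. Symmetrically for $\prec_0^x$ using one point from each $\prec_0^x$-minimal complement-trace. So I would let $w(x)$ be the (finite, by the finiteness-of-minima hypothesis) union of these chosen representative points, one per minimal element of each of the two partial orders. The claim is then: $x$ is in the margin of $h$ iff $h|_{w(x)}$ takes the value $1$ on at least one point of $w(x)$ coming from the $\prec_1^x$-side \emph{and} the value $0$ on at least one point coming from the $\prec_0^x$-side. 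This gives the required $f: \{0,1\}^{w(x)} \to \{0,1\}$. Certifiability then follows immediately: the certifier samples $S \sim P_X^m$ with $m = O(1/\epsilon^2)$, queries $h$ on $\bigcup_{x\in S}(\{x\}\cup w(x))$, uses the labels on $\{x\}$ to decide membership in $\err{h}$ and $f(h|_{w(x)})$ to decide membership in $\mar{\U}{h}$, and reports the empirical frequency of $\err{h}\cup\mar{\U}{h}$ on $S$; Hoeffding gives the $\epsilon$-accuracy.

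The main obstacle is verifying the ``only if'' direction of the witness property rigorously, i.e.\ that the chosen representative points genuinely detect the nonemptiness of every trace. The subtlety is that a minimal element $h'$ under $\prec_1^x$ has $\U(x)\cap h' \subseteq \U(x)\cap h$ only for those $h$ that lie \emph{above} $h'$; one has to argue that every hypothesis with nonempty trace lies above \emph{some} minimal element, which requires that the partial order has no infinite strictly decreasing chains without a minimum below them — this is exactly where the ``finite number of minima for each $x$'' hypothesis must be invoked carefully (one should note it is used to guarantee both that such minima exist below every element and that $w(x)$ is finite). A secondary point to handle cleanly is the bookkeeping around the equivalence classes and the removed empty-intersection hypotheses, ensuring that $f$ is well-defined as a function of $h|_{w(x)}$ alone and does not secretly depend on which equivalence-class representative was picked. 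I would also remark that measurability of the resulting estimator is unproblematic since $w(x)$ and $f$ are fixed per $x$ and the robust loss is assumed measurable.
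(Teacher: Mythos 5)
Your plan follows essentially the same route as the paper's proof: pick one representative point from the trace of each minimal hypothesis under $\prec_1^x$ and under $\prec_0^x$, take the (finite) union of these points together with $x$ as the witness set, argue that any hypothesis with a nonempty trace inherits a suitably labeled witness point from a minimum lying below it, and then certify by sampling $O(1/\epsilon^2)$ points and querying $h$ on each point's witness set, with Hoeffding giving the accuracy guarantee. The subtlety you flag in the only-if direction --- that every hypothesis with nonempty trace must dominate some minimal element, which ``finitely many minima'' alone does not formally guarantee --- is a genuine point, but the paper's own proof makes the same leap, so your treatment is, if anything, slightly more careful than the original.
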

\begin{proof}
For this proof we will identify hypotheses with their equivalence classes in each partial ordering.
Let $\Mcal_0(x)\subset\H$ be the set of minima for $\prec_0^x$ and $\Mcal_1(x)\subset\H$ for $\prec_1^x$. 
For each $h\in\Mcal_0(x)$, we pick a point $x'\in\U(x)$ such that $x'\in h$ but $x'\notin h'$ for any $h'\succ h$, thus forming a set $w_0(x)$. Similarly, we define the set $w_1(x)$. We claim that $w(x) = w_0(x)\cup w_1(x)\cup\{x\}$ is a witness set for $x$, i.e., we can determine whether $x$ is in the margin of any hypothesis $h\in\H$ by looking at labels that $h$ assigns to points in $w(x)$.

We only consider the case where $h(x) = 0$, since the $h(x) = 1$ case is similar. We claim that $x$ is in the margin of $h$ if and only if there exists a point in $w_1(x)$ that is assigned the label 1 by $h$. Indeed, suppose there exists such a point. Then since the point lies in $\U(x)$ and is assigned the opposite label as $x$ by $h$, $x$ must lie in the margin of $h$. For the other direction, suppose $x$ lies in the margin of $h$. Then there must exist a point $x'\in\U(x)$ such that $h(x') = 1$, which means there must be a hypothesis $\hat{h}\in\Mcal_1(x)$ such that $\hat{h}\prec_1^x h$, which means there must exist $\hat{x}\in w_1(x)$ such that $h(\hat{x}) = 1$.
\end{proof}

We can easily verify, for example, that for the $(\H, \U)$ pair defined in Observation~\ref{obs:half-spaces}, the set of minima defined by the partial orderings above is finite. Indeed, the (equivalence class of) half-spaces corresponding to the four corners of the unit cube constitute the minima. 

\subsection{Query complexity of adversarial attacks and its connection to robust PAC learning}
Even though in the literature on (practical) adversarial attacks an adversary is often modelled as an actual algorithm, in the theoretical literature the focus has been on whether adversarial examples merely exist\footnote{E.g., the definition of adversarial loss in Section~\ref{s:setup} is only concerned with whether an adversarial point exists.}. However, one can say a robust learner is successful if it merely finds a hypothesis that is potentially non-robust in the conventional sense yet whose adversarial examples are hard to find for the adversary. To formalize this idea, one needs to define some notion of ``bounded adversary'' in a way that enables the study of the complexity of finding adversarial examples. Attempts have been made at studying computationally bounded adversaries in certain scenarios~\cite{Garg2019arxiv} but not in the distribution-free setting. Here we study an adversary's \emph{query complexity}. We start by formally defining an adversary and discussing a few different properties of adversaries.

\begin{definition}\label{def:perfect_adversary}
For an $(\H, \U)$ pair, an \emph{adversary} is an algorithm $\A$ tasked with the following: given a set $S$ of $n$ points from the domain, and query access to a hypothesis $h\in\H$, return a set $S'$ such that (i) each point $x'\in S'$ is an adversarial point to some point in $S$ and (ii) for every $x\in S$ that has an adversarial point, there exists $x'\in S'$ such that $x'$ is an adversarial point for $x$. If the two conditions hold we call $S'$ an \emph{admissible attack} on $S$ w.r.t. $(\H, \U)$.
We call an adversary \emph{perfect} for $(\H, \U)$ if for every $S\subset X$ it outputs an admissible attack on $S$.
We say that the adversary is \emph{proper} if all its queries are in the set $\bigcup_{x\in S}\U(x)$. 
\end{definition}

There have been (successful) attempts~\cite{papernot2017practical,brendel2018decision} at attacking trained neural network models where the adversary was not given any information about the gradients, and had to rely solely on black-box queries to the model. Our definition of the adversary fits those scenarios. Next, we define the query complexity of the adversary.

\begin{definition}\label{def:efficient_adversary}
If, for $(\H, \U)$, there is a function $f:\naturals\to \naturals$ such that, for any $h\in \H$ and any set $S$, the adversary $\A$ will produce an admissible attack $S'$ after at most $f(|S|)$ queries, we say that adversary $\A$ has \emph{query complexity} bounded by $f$ on $(\H, \U)$. We say that the adversary is \emph{efficient} if $f(n)$ is linear in $n$.
\end{definition}

Note that it is possible that the adversary's queries are adaptive, i.e., the $i^{\text{th}}$ point it queries depends on the output of its first $i-1$ queries. A weaker version of an adversary is one where that is not the case.
\begin{definition}\label{def:non-adaptive_adversary}
An adversary is called \emph{non-adaptive} if the set of points it queries is uniquely determined by the set $S$ before making any queries to $h$.
\end{definition}

Intuitively, there is a connection between perfect adversaries and witness sets because a witness set merely helps identify the points in $S$ that have adversarial points, whereas an adversary finds those adversarial points. 
\begin{observation}\label{obs:witness_vs_non-adaptive_adversary}
If the $(\H, \U)$ pair exhibits a perfect, non-adaptive adversary with query complexity $f(n)$, then it also has witness sets of size $f(n)$.
\end{observation}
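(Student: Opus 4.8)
The plan is to obtain the witness set by running the non-adaptive adversary on a single-point input. Fix $x\in X$ and run the adversary $\A$ on $S=\{x\}$. Since $\A$ is non-adaptive, the set $Q\subseteq X$ of points it queries is determined by $S$ alone, \emph{before} any label of $h$ is revealed, and $|Q|\le f(1)$ by the query-complexity bound. I would take $w(x):=Q$ as the candidate witness set, so $|w(x)|\le f(1)$ (in particular finite, matching the claimed bound for a single-point input).

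Next I would reduce to a deterministic adversary and define the labeling map $g:\{0,1\}^{w(x)}\to\{0,1\}$. A perfect adversary, by definition, always outputs an admissible attack, so fixing any value of its internal random string leaves it perfect, non-adaptive, and within the same query bound; hence we may assume $\A$ deterministic. With $S=\{x\}$ fixed, the output set $S'$ of $\A$ is then a deterministic function $S'(\sigma)$ of the vector $\sigma=h|_Q\in\{0,1\}^{Q}$ of answers to its queries. Define $g(\sigma):=\indct{S'(\sigma)\neq\emptyset}$, and (arbitrarily, say $0$) on label patterns realized by no $h\in\H$, which the witness-set definition does not constrain.

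Finally I would verify the defining biconditional. Admissibility of the attack $S'$ on $S=\{x\}$ gives: condition (i) forces every point of $S'$ to be an adversarial point of $x$ w.r.t.\ $h$, and condition (ii) forces $S'\neq\emptyset$ whenever $x$ has an adversarial point w.r.t.\ $h$. Hence $S'\neq\emptyset$ if and only if $x$ has an adversarial point w.r.t.\ $h$, which is exactly the statement $x\in\mar{\U}{h}$ (read as a subset of $X$). Since $w(x)=Q$, the answers $\A$ receives are exactly $h|_{w(x)}$, so $g(h|_{w(x)})=\indct{S'(h|_{w(x)})\neq\emptyset}=\indct{x\in\mar{\U}{h}}$ for every $h\in\H$; the same observation shows $g$ is well defined on realized patterns, since two hypotheses agreeing on $w(x)$ induce the same $S'$ and hence the same margin status at $x$.

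The point that makes the argument work — and the only place I expect a real subtlety — is that non-adaptivity is precisely what turns $S'$ into a function of the \emph{labels on a fixed a-priori set} rather than of $h$ itself: an adaptive adversary could steer later queries using earlier answers, so although its overall query set would still be determined by $h$, there would be no single fixed set $w(x)$ of size $O(f(1))$ on which $h|_{w(x)}$ determines the output, and the reduction would collapse. A minor degenerate case is $f(1)=0$: then $w(x)=\emptyset$ and $g$ is forced to be constant, which is consistent because $S'$ is then the same for all $h$, i.e.\ either every hypothesis or none has $x$ in its margin. (Combined with the implication ``finite witness set $\Rightarrow$ certifiable'' underlying Theorem~\ref{thm:partial_order}, this also yields certifiability of such $(\H,\U)$ pairs, though the observation itself asserts only the existence of the witness sets.)
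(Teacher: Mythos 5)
Your proof is correct, and since the paper states this observation without giving an explicit proof, your argument is exactly the intended one: run the non-adaptive adversary on the singleton $S=\{x\}$, take its a-priori determined query set as $w(x)$, and use admissibility on a one-point input to get the biconditional $S'\neq\emptyset \iff x\in\mar{\U}{h}$, so that $\indct{S'(h|_{w(x)})\neq\emptyset}$ is the required witness map (the derandomization step is harmless here, as the paper's adversaries are deterministic by definition). The only cosmetic mismatch is the size bookkeeping: your construction gives a per-point bound of $f(1)$, which is the natural reading of the loosely stated ``witness sets of size $f(n)$'' and yields a total of $n\cdot f(1)=f(n)$ over $n$ points in the efficient (linear-$f$) case the paper cares about.
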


Finally, we tie everything together by showing that, for \emph{properly compressible classes}, the existence of a proper, perfect adversary implies that the robust learning problem has a small sample complexity. We say a class $\H$ is properly compressible if \emph{(i)} it admits a sample compression scheme~\cite{littlestone1986relating} of size $O(\vc\cdot \log(n))$---where $\vc$ is the VC-dimension of $\H$ and $n$ is the number of samples that are being compressed---, and \emph{(ii)} the hypothesis outputted by the scheme is always a member of $\H$. Note that (i) holds for all hypothesis classes (by a boosting-based compression scheme~\cite{schapire2013boosting,moran2016sample}). Furthermore, many natural classes are shown to have proper compression schemes, and it is open if this is true for all $\vc$-classes. (see \cite{floyd1995sample, ben1998combinatorial}).

\begin{theorem}\label{thm:bounded_adv_impl_learnable}
Assume $\H$ is properly compressible. If the robust learning problem defined by $(\H, \U)$ has a perfect, proper, and efficient adversary, then in the robust realizable-case ($\rLo{\U}{P}(\H)=0$) it can be robustly learned with $O(t\log^2(t))$ samples, where $t=\vc(\H)/\epsilon^2$.
\end{theorem}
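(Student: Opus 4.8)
\noindent\emph{Proof proposal.} The plan is to adapt the compression-based robust learner of \cite{MontasserHS19}, but to let the perfect, proper, efficient adversary play the role that the (exponentially large) dual class plays there, and to use proper compressibility to keep the final compression size near-linear in $\vc=\vc(\H)$. Throughout we exploit robust realizability: fix $h^\star\in\H$ with $\rLo{\U}{P}(h^\star)=0$. Then on any sample drawn from $P$, $h^\star$ is \emph{robustly consistent}, meaning $h^\star(z)=y_i$ for every example $(x_i,y_i)$ and every $z\in\U(x_i)$; the same holds for any additional labeled point $(z,y_i)$ with $z\in\U(x_i)$. This is what will keep the consistency-based (binary) compression learner from ever failing when we feed it such augmented samples.

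First I would build a \emph{robust weak learner} that uses only $O(\vc)$ labeled points and calls to the adversary. Given a weighting $D$ over the labeled sample $S$, draw $n_0=\Theta(\vc)$ indices i.i.d.\ from $D$ to form a subsample $T$, and iterate: apply the proper compression learner for $\H$ to the current working set (initially $T$) to obtain $g\in\H$ consistent with it in the binary sense; run the proper adversary on $T$ against $g$; if it reports no adversarial point of any element of $T$, output $g$; otherwise append the returned adversarial points, each labeled by the label of its source point in $T$, and repeat. Because the adversary is proper these points lie in $\bigcup_{x\in T}\U(x)$; because it is perfect and $h^\star$ is robustly consistent, $h^\star$ remains consistent with the working set throughout, so the compression learner never fails. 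Two facts need to be established here: (a) this loop halts after a bounded number of steps, and (b) a $g$ that is robustly consistent with $\Theta(\vc)$ points drawn from $D$ has robust loss at most, say, $\tfrac13$ under $D$ --- a compression/VC-type statement for the robust loss in which the \emph{efficient} adversary is precisely what lets the effective sample size remain $O(\vc)$ instead of scaling with the dual VC-dimension as it must in the oracle-free analysis.

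Next I would run $\alpha$-Boost for $R=O(\log m)$ rounds over the $m$ points of $S$, invoking the robust weak learner with the current multiplicative weights in each round to obtain $h_1,\dots,h_R\in\H$, and output the majority vote $\hat h$ of $h_1,\dots,h_R$. Running the standard boosting analysis on the binary outcomes ``$h_i$ is robustly correct on $x_j$'' drives the fraction of $j$ on which a strict majority of the $h_i$ are robustly correct up to $1$; and since each such $h_i$ agrees with $y_j$ on \emph{all} of $\U(x_j)$, a strict majority of them forces $\hat h$ itself to agree with $y_j$ on all of $\U(x_j)$, so $\rLo{\U}{S}(\hat h)=0$. The crucial point is that $\hat h$ is the output of a sample compression scheme applied to $S$: the compressed information consists of the $R\cdot n_0=O(\vc\log m)$ sampled indices together with, per round, $O(\log m)$ bits recording which adversarial points the (deterministic) proper adversary returned --- and because that adversary is proper, each such point is named by its source point (which lies in $S$) plus a bounded index, so nothing outside $S$ has to be stored. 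Re-running the deterministic adversary during reconstruction recovers the working sets, hence $h_1,\dots,h_R$ and $\hat h$. Thus $\hat h$ is produced by a compression scheme of size $k=\tilde{O}(\vc\log m)$.

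Finally, since $\hat h$ comes from a size-$k$ sample compression scheme and has zero empirical robust loss on $S$, and the robust loss is a $\{0,1\}$-valued loss function, the standard generalization bound for sample compression schemes~\cite{littlestone1986relating} gives $\rLo{\U}{P}(\hat h)\le\tilde{O}\!\left(\sqrt{k/m}\right)$ with probability at least $1-\delta$ over $S$; substituting $k=\tilde{O}(\vc\log m)$ and solving $\tilde{O}(\sqrt{k/m})\le\epsilon$ yields $m=O(t\log^2 t)$ with $t=\vc/\epsilon^2$. I expect the main obstacle to be the weak-learning step --- showing that the compression-learner/adversary loop terminates quickly and that its output has constant robust error under $D$ while depending on only $O(\vc)$ sampled points, i.e.\ genuinely trading the dual VC-dimension for the adversary's linear query complexity --- together with the bookkeeping needed to verify that the boosted construction really assembles into a legitimate $\tilde{O}(\vc\log m)$-size compression scheme with a well-defined reconstruction map.
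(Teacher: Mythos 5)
Your route is genuinely different from the paper's, and it is worth contrasting before pointing at the gap. The paper does not use boosting or a weak learner at all: it shows that the adversary directly yields an adversarially robust \emph{proper compression scheme}. Given $S$ labeled by a robustly consistent $h^*\in\H$, it runs the perfect, proper, efficient adversary once against $h^*$ on all of $S$ and inflates $S$ by \emph{all queried points} (with their $h^*$-labels) to get $T$ with $|T|=O(|S|)$. Because the adversary is perfect and its (possibly adaptive) behaviour is determined by the oracle answers, any $\hat h\in\H$ that agrees with $h^*$ on $T$ triggers the identical query/answer sequence and attack set, hence has the same robust loss as $h^*$ on every point of $S$; so it suffices to compress $T$ under the \emph{binary} loss with the assumed proper scheme of size $O(\vc(\H)\log|T|)$, and compressed points lying outside $S$ are re-encoded, via properness of the adversary, by their source point in $S$ plus $O(1)$ side-information bits (the oracle answers needed to re-simulate the adversary and an index into its queries). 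Plugging $k=O(\vc(\H)\log m)$ into the standard compression generalization bound gives $m=O(t\log^2 t)$; your final bookkeeping coincides with this last step.

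The weak-learning step you yourself flag is a genuine gap, in two places. First, you give no bound on the number of iterations of your compress/attack loop: each round only appends the adversarial points found against the current candidate $g$, and nothing prevents the adversary from producing fresh adversarial points for every new candidate indefinitely (this is exactly the difficulty that \cite{MontasserHS19} handle by discretizing the inflated neighbourhoods via the dual class, which you are trying to avoid). Second, and more fundamentally, the claim that a $g\in\H$ robustly consistent with $\Theta(\vc(\H))$ points drawn from $D$ has robust loss at most $1/3$ under $D$ is precisely a uniform-convergence statement for the robust loss class $\{\err{h}\cup\mar{\U}{h} \mid h\in\H\}$, whose VC dimension can be arbitrarily large even when $\vc(\H)=1$; this is the very reason the dual VC dimension appears in \cite{MontasserHS19}, and robustly-consistent proper rules can fail to generalize at all. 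Your weak learner uses every one of the $n_0$ points it draws, so no compression-style generalization argument is available for it either, and the proposal supplies no mechanism by which the adversary's query bound substitutes for the missing uniform convergence (such a mechanism may exist, e.g., bounding the growth function of the robust loss class through the adversary's bounded query trees, but it is not in your argument, and with it boosting would be unnecessary). Until that step is justified, the proof does not go through; the paper's inflate-then-binary-compress construction sidesteps the issue entirely by running the adversary against the fixed labeling hypothesis $h^*$ rather than against a sequence of learned candidates.
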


\begin{proof}[Proof Sketch.]
We adapt the compression-based approach of~\cite{MontasserHS19} to prove the result. Let us assume that we are given a sample $S$ of size $|S|=m$ that is labeled by some $h\in \H$, and want to ``compress'' this sample using a small subset $K\subseteq S$. Let us assume that $\hat{h}$ is the hypothesis that is reconstructed using $K$. For the compression to succeed, we need to have $\rlo{\U}(h,x,y)=\rlo{\U}(\hat{h},x,y)$ for every $(x,y)\in S$. Given the perfect proper efficient adversary, we can find all the adversarial points in $S$ using $C$ queries per point in $S$, for some constant $C\geq 0$.
In fact, we can amend the points corresponding to these queries to $S$ to create an inflated set, which we call $T$. Note that $|T| \leq C\cdot |S|$. 
Furthermore, we can now replace the condition $\forall (x,y)\in S, \rlo{\U}(h,x,y)=\rlo{\U}(\hat{h},x,y) $ with $\forall (x,y)\in T, \blo(h,x,y)=\blo(\hat{h},x,y)$  (the latter implies the former because of the definition of perfect adversary). Therefore, our task becomes compressing $T$ with respect to the standard binary loss, for which we will invoke the assumption that $\H$ has a proper compression scheme. Assume this compression scheme compressed $T$ to a subset $K\subset T$ of size $k$. We argue that, by invoking the adversary we can convert the compressed set to only include points form the original sample $S$, and some additional bits as side information. For each point $x\in K$ in the compressed set that is an original sample point from $S$, we know that $x\in\U(x_S)$ for some $x_S\in S$, since the adversary is proper. We construct a new compressed set $K'\subset S$, by replacing such points $x$ with their corresponding points $x_S$ and bits $b$ to encode the rank of the point $x$ among the queries that the adversary would make for $x_S$. Now, the decompressor can first recover the set $K$ by invoking the adversary, and then use the standard decompression.
Finally, the size of the compression is $O(\log(m))VC(\H)$ and the results follows from the classic connection of compression and learning~\cite{littlestone1986relating,MontasserHS19}.
\end{proof}

This result shows an interesting connection between the difficulty of finding adversarial examples and that of robust learning. In particular, if the adversarial points can be found easily (at least when measured by query complexity), then robust learning is almost as easy as non-robust learning (in the sense of agnostic sample complexity). Or, stated in the contrapositive, if robust learning is hard, then even if adversarial points exist, finding them is going to be hard.
It is possible to further extend the result to the agnostic learning scenario, using the same reduction from agnostic learning to realizable learning that was proposed by \cite{david2016supervised} and used in \cite{MontasserHS19}.

\section{Conclusion}
We formalized the problem of black-box certification and its relation to an adversary with bounded query budget. We showed the existence of an adversary with small query complexity implies small sample complexity for robust learning. 
This suggests that the apparent hardness of robust learning -- compared to standard PAC learning -- in terms of sample complexity may not actually matter as long as we are dealing with bounded adversaries.
It would be interesting to explore other types of adversaries (e.g., non-proper and/or non-perfect) to see if they lead to efficient robust learners as well. Another interesting direction is finding scenarios where finite unlabeled data can substitute the knowledge of the marginal distribution discussed in Section~\ref{s:vc_classes}.

\section*{Acknowledgements} We thank the Vector Institute for providing us with the meeting space in which this work was developed! Ruth Urner and Hassan Ashtiani were supported by NSERC Discovery Grants.

\bibliography{refs}
\bibliographystyle{icml2020}


\newpage
\appendix
\newcommand{\M}{{\mathcal M}}

\section{Note on our notation for sets and functions}\label{app:notation}
We use the following notation for sets and functions:

\begin{tabular}{ll}
 $2^X$ &  the power-set (set of all subsets) of $X$\\
 $Y^X$ &   the set of all functions from $X$ to $Y$\\
 $f: X \to Y$ &  $f$ is a function from $X$ to $Y$\\
\end{tabular}

Functions from some set $X$ to some set $Y$ are a special type of relations between $X$ and $Y$.
Thus a function $f: X \to Y$ is a subset of $X\times Y$, namely 
\[
 f = \{(x,y)\in X\times Y ~\mid~ y = f(x)\} 
\]
If $h: X\to Y$ is a (not necessarily binary) classifier, and $P$ is a probability distribution over $X\times Y$, then the probability of misclassification is $P(\err{h})$, where $\err{h}$ is the complement of $h$ in $X\times Y$, that is
\[
 \err{h} = \{(x,z)\in X\times Y ~\mid~ z \neq f(x)\} = (X\times Y)\setminus h
\]
If $Y = \{0,1\}$ is a binary label space, then it is also common to identify classifiers $h: X \to \{0,1\}$ with a subset of the domain, namely the set $h^{-1}(1)$, that is the set of points that is mapped to label $1$ under $h$:
\[
h^{-1}(1) = \{ x \in X ~\mid~ h(x) = 1\}
\]
We switch between identifying $h$ with $h^{-1}(1)$ and viewing $h$ as a subset of $X\times Y$, depending on which view aids the simplicity of argument in a given context.

We defined the margin areas of a classifier (with respect to a perturbation type) again as subsets of $X\times Y$.
\[
\mar{\U}{h} =  \{(x,y)\in X\times Y ~\mid~ \exists z\in\U(x): h(x)\neq h(z)\}
\]
Note, that here, if for a given domain point $x$, we have $(x,y) \in \mar{\U}{h}$ for some $y\in Y$, then $(x,y') \in \mar{\U}{h}$ for all $y'\in Y$.
Thus, the sets $\mar{\U}{h}\subseteq X\times Y $ are not functions.
Rather, they can naturally be identified with their projection on $X$, and we again do so if convenient in the context.

The given definitions of $\err{h}$ and $\mar{\U}{h}$,  naturally let us express the robust loss as the probably measure of a subset of $X\times Y$:
$$\rLo{\U}{P} (h) = P(\err{h}\cup\mar{\U}{h}).$$

\section{Note on measurability}\label{app:measurable}
Here, we note that allowing the perturbation type $\U$ to be an arbitrary mapping from the domain $X$ to $2^X$ can easily lead to the adversarial loss being not measurable, even if $\U(x)$ is a measurable set for every $x$.  Consider the case $X = \reals$, and a distribution $P$ with $P_X$ uniform on the interval $[0,2]$. Consider a subset $M\subseteq (0,1)$ that is not Borel-measurable. Consider a simple threshold function
\[
 f: \reals \to \{0,1\}, \quad f(x) = \indct{x < 1}
\]
and a the following perturbation type:
\[
 \U(x) ~= ~\left\{\begin{array}{lll}
                                         \emptyset & \text{if} & x\notin M\\
                                         \{x+1\} & \text{if }& x\in M
                                        \end{array}
 \right.
\]
Clearly, $f$ is a measurable function, and every set $\U(x)$ is measurable.
However, we get $\mar{\U}{f} = M $, that is, the margin area of $f$ under these perturbations is not measurable, and therefore the adversarial loss with respect to $\U$ is not measurable.
Note that the same phenomenon can occur for sets $\U$ that are always open intervals containing the point $x$. With the same function $f$, for perturbation sets
\[
 \U(x) ~= ~\left\{\begin{array}{lll}
                                         \B_r(x) \cap (0,1) & \text{if} & x <1, x\notin M \\
                                         \B_r(x) \cap (1,2) & \text{if} & x > 1 \\
                                         (0,2) & \text{if }& x\in M \text{ or } x = 1
                                        \end{array}
 \right.
\]
we get $\mar{\U}{f} = M \cup \{1\}$, which again is not measurable.

We may thus make the following implicit assumptions on the sets $\U(x)$:
\begin{itemize}
 \item $x\in \U(x)$ for all $x\in X$
 \item if $X$ is an uncountable domain, we assume $X$ is equipped with a separable metric and $\U(x) = \B_r(x)$ is an open ball around $x$  
\end{itemize}
Note that the latter assumption implies that $\mar{\U}{h}$ is measurable for a measurable predictor $h$. This can be seen as follows:
It $h$ is a (Borel-)measurable function, then both $h^{-1}(1) = \{x\in X \mid h(x) = 1\}$ and $h^{-1}(0) = \{x\in X \mid h(x) = 0\}$ are measurable sets by definition.
Now, if we consider  ``blowing up'' these sets by adding open balls around each of their members, we obtain open (as a union of open sets), and thus measurable sets:
\[
 \M_r^1 : =  \bigcup_{x \in h^{-1}(1)} \B_r(x)
\]
and 
\[
 \M_r^0 : =  \bigcup_{x \in h^{-1}(0)} \B_r(x).
\]
Now the margin area can be expressed as a simple union of intersections, and is therefore also measurable:
\[
 \mar{\U}{h} = (\M_r^1 \cap h^{-1}(0)) \cup (\M_r^0 \cap h^{-1}(1))
\]
Note that this equality depends on the balls as perturbation sets inducing a symmetric relation, that is $x\in \U(z)$ if and only if $z\in \U(x)$. This condition does not hold in the above counterexample construction. However, this argument shows it is sufficient (together with openness) for measurability of the sets $\mar{\U}{h}$.
\section{Proofs and additional results to Section 3}\label{app:ssl}

\subsection{Some background}
We first briefly recall the notions of $\epsilon$-nets and $\epsilon$-approximations and their role in learning binary hypothesis classes of finite VC-dimension.
We will frequently use these concepts in our proofs in this section.

\paragraph{$\epsilon$-nets and $\epsilon$-approximations \citep{HausslerW87}} Let $Z$ be some domain set and let $\G\subseteq 2^Z$ be a collection of (measurable) subsets of $Z$ and let $D$ be a probability distribution over $Z$.
Let $\epsilon \in (0,1)$. A finite set $S\subseteq Z$ is an \emph{$\epsilon$-net} for $\G$ with respect to $D$ if
\[
 S\cap G \neq \emptyset
\]
for all $G\in \G$ with $P(G) \geq \epsilon$. That is, an $\epsilon$-net ``hits'' every set in the collection $\G$ that has probability weight at least $\epsilon$.
A finite set $S\subseteq Z$ is an \emph{$\epsilon$-approximation} for $\G$ with respect to $D$ if
\[
 \left|P(G) - \frac{|G \cap S|}{|S|}\right|\leq \epsilon
\]
for all $G\in\G$.
It is well known that,
given also  $\delta\in (0,1)$, if $\G$ has finite VC-dimension, then an iid sample $S$ of size at least $\tilde{\Theta}\left(\frac{\vc(\G) + \log(1/\delta)}{\epsilon}\right)$   from distribution $D$ is an $\epsilon$-net for $\G$ with probability at least $(1-\delta)$ (see, eg, Theorem 28.3 in \cite{shalev2014understanding}); and an iid sample $S$ of size at least $\tilde{\Theta}\left(\frac{\vc(\G) + \log(1/\delta)}{\epsilon^2}\right)$   from distribution $D$ is an $\epsilon$-approximation for $\G$ with probability at least $(1-\delta)$ (we are omitting logarithmic factors here). 

\paragraph{Learning VC-classes (\cite{vapnikcherv71,Valiant84,blumer1989learnability}} If $X$ is a domain, $Y= \{0,1\}$ is a binary label space, and $\H\subseteq Y^X\subseteq 2^{(X\times Y)}$ is a hypothesis class of finite VC-dimension, then the class of error sets $\err{\H} = \{\err{h} \mid h\in \H\}$, that is the class of complements of $\H$, has finite VC-dimension $\vc(\err{\H}) = \vc(\H)$. 
For distributions $P$ over $X\times Y$, we get that sufficiently large samples (as indicated above) are $\epsilon$-nets of $\err{\H}$.
Now, if a sample $S$ is an $\epsilon$-net of the  class $\err{\H}$ with respect to $P$, then every function in the \emph{version space} $\V_S(\H)$ of $S$ with respect to $\H$ has error less than $\epsilon$.
Recall the version space is defined as those functions in $\H$ that have zero error on the points in $S$, that is
\[
 \V_S(\H) = \{h\in \H \mid \bLo{S}(h) = 0\}.
\]
If $P$ is realizable by $\H$, an empirical risk minimizing (ERM) learner, will output a hypothesis from the version space (the version space is non-empty under the realizability assumption) and therefore output a predictor of binary loss at most $\epsilon$ (with high probability).

For general (not necessarily realizable) learning, note that large enough samples $S$ are $\epsilon$-approximation of $\err{\H}$ (with high probability at least $1-\delta$ as above).
This is also referred to as \emph{uniform convergence} for the hypothesis class $\H$.
Thus, every function $h\in \H$ has true loss that is $\epsilon$-close to its empirical loss on $h$, and any empirical risk minimizer is a successful learner for $\H$ even in the agnostic case.

With these preparations, we proceed to the proofs of Theorem \ref{thm:learnability_pos}, Theorem \ref{thm:robust_realizable} and Theorem \ref{thm:realizable_clusterable}.

\subsection{Proofs}\label{app:proofs}

\begin{proof}[Proof of Theorem \ref{thm:learnability_pos}]
We recall that the robust loss of a classifier $h$ with respect to distribution $P$ over $X\times Y$ is given by 
\[
 \rLo{\U}{P}(h) ~=~ P(\err{h}\cup\mar{\U}{h})
\]
Thus, to show that empirical risk minimization with respect to the robust loss is a successful learner, we need to guarantee that large enough samples are $\epsilon$-approximations for the class 
$\G = \{(\err{h}\cup\mar{\U}{h}) \subseteq X \times Y ~\mid~ h\in \H\}$ of point-wise unions error and margin regions. 

A simple counting argument involving Sauer's Lemma (see Chapter 6 in \cite{shalev2014understanding}, and exercises therein)
shows that $\vc(\G) \leq 2D\log(D)$, where $D = \vc(\H) + \vc(\mH)$.
Thus, a sample of size $\tilde{\Theta}\left(\frac{D\log D + \log(1/\delta) }{\epsilon^2}\right)$ will be an $\epsilon$-approximation of $\G$ with respect to $P$ with probability at least $1-\delta$ over the sample. 
Thus any empirical risk minimizer with respect to $\rlo{\U}$ is a successful proper and agnostic robust learner for $\H$.
\end{proof}

\begin{proof}[Proof of Theorem \ref{thm:robust_realizable}]
Note that robust realizability means there exists a $h^*\in \H $ with $\rLo{\U}{P}(h^*) = 0$ and this implies $\bLo{P}(h^*) = 0$. 
That is, the distribution is (standard) realizable by $\H$. 
The above outlined VC-theory tells us that for an iid sample $S$ of size $\Tilde{\Theta}\left(\frac{\vc(\H)+ \log(\frac{1}{\delta})}{\epsilon}\right)$ guarantees that all functions in the \emph{version space} of $S$ (that is all $h\in \H$ with $\Lo{S}(h) = 0$) have true binary loss at most $\epsilon$ (with probability at least $1-\delta$). 
Now, with access to $P_X$ a learner can remove all hypotheses with $P(\mar{\U}{h}) > 0$ from the version space and return any remaining hypothesis.
Note that, since $h^*$ is assumed to satisfy $\rLo{\U}{P}(h^*) = 0$, we have $P(\err{h^*}) = 0$ and $P(\mar{\U}{h}) = 0$, therefore, the pruned version will contain at least one function.
Now, for any function $h_p$ in the the pruned version space, we obtain
\begin{align*}
 \rLo{\U}{P}(h_p) & ~=~ P(\err{h_p} ~\cup~ \mar{\U}{h_p}) \\
  & ~\leq~ P(\err{h_p}) + P(\mar{\U}{h_p}) \\
  & ~\leq~ \epsilon + 0 ~=~ \epsilon.
\end{align*}
Thus, access to the marginal allows for a successful learner in the robust-realizable case.
\end{proof}

\begin{proof}[Proof of Theorem \ref{thm:double}]
We will modify the lower bound construction of Theorem \ref{thm:stat_impossibility} as follows: we add an additional point $x_8$ to the domain set, which has zero probability mass under both $P^1$ and $P^2$. 
We set $\U(x_8) = \U(x_7) = \{x_7, x_8\}$.
We modify the probability weights of points $x_1, \ldots, x_6$ under $P^1$ and $P^2$ by dividing them by $2$ (i.e., all respective denominators in the proof of Theorem \ref{thm:stat_impossibility} become $12$, and we add weight accordingly to $x_7$, so that $P^i(x_7) = 1/2 + 1/12$ under both distributions. Functions $h_1$ and $h_2$ are extended to the new point by setting $h_1(x_8) = h_2(x_8) = 0$. Thus, the indistinguishability phenomenon of the construction remains the same.

Now we add a function $h_r = \indct{x = x_8}$ to the class $\H$.
This yields $P^{i}(\err{h_r}) = P^{i}(x_8) = 0$, for $i\in\{1,2\}$, thus both distributions are realizable with respect to the $0/1$-loss now. However $P^{i}(\mar{\U}{h_r}) = 1/2 + 1/12$, for $i\in\{1,2\}$, thus $h_r$ has adversarial loss $1/2 + 1/12$ on both distribution and the construction thus remains otherwise analogous. We now have $\rLo{\U}{P^i}(h_i) = 2/12$, thus $h_r$ is does not affect the optimal robust classifier in $\H$.

Additionally, we add the constant $1$ function $h_c$ to the class $\H$. For this function (as for any constant classifier) the margin area is empty, thus the distributions are ``margin realizable'' by $\H$. However, we have $P^{i}(\err{h_c}) = 1$, for $i\in\{1,2\}$, thus $h_c$  also has adversarial loss $1$ on both distribution and the construction still remains otherwise unchanged.
\end{proof}

\subsection{Additional results}\label{app:add_results}

\subsubsection{0/1-Realizability\\
$\exists h^*\in \H$ with $\bLo{P}(h^*) = 0$}

Theorem \ref{thm:double} shows that $0/1$-realizability does not suffice for semi-supervised learning with a margin oracle for $\H$. However, here we show that the following \emph{extended margin oracle} does suffice: we assume that the learner has oracle access to the weights of the sets $\mar{\U}{h}, h\Delta h'$, and $\mar{\U}{h} \cap (h\Delta h')$, for all $h,h'\in \H$, where the sets $h\Delta h' \subseteq X$ are defined as follows:
\[
h\Delta h' = \{x\in X ~\mid~ h(x) \neq h'(x)\}.
\]

\begin{theorem}\label{thm:realizable}
Let $X$ be some domain, $\H$ a hypothesis class with finite VC-dimension and $\U:X\to 2^X$ any perturbation type.
If a learner is given additional access to an extended margin oracle for $\H$, then $\H$ is properly learnable with respect to the robust loss $\rlo{\U}$ and the class of distributions $P$ that are $0/1$-realizable by $\H$, that is we have $\bLo{P}(\H) = 0$, with labeled sample complexity $\tilde{O}(\frac{\vc(\H) + \log(1/\delta)}{\epsilon})$.
\end{theorem}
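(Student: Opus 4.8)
The plan is to imitate the argument for Theorem~\ref{thm:robust_realizable}, but to use the extended margin oracle to handle the interaction term $\err{h}\cap\mar{\U}{h}$ -- which, by the constructions underlying Theorems~\ref{thm:stat_impossibility} and~\ref{thm:double}, is precisely the piece of information a plain margin oracle fails to supply. First I would draw a labeled sample $S$ of size $\tilde{\Theta}\!\left(\frac{\vc(\H)+\log(1/\delta)}{\epsilon}\right)$; since $\vc(\err{\H})=\vc(\H)$, with probability at least $1-\delta$ the sample $S$ is an $\epsilon$-net for $\err{\H}$, so every hypothesis in the version space $\V_S(\H)$ has true binary loss at most $\epsilon$. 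By $0/1$-realizability there is an $h^*\in\H$ with $\bLo{P}(h^*)=0$, and this $h^*$ lies in $\V_S(\H)$, so the version space is non-empty; fix any $h_0\in\V_S(\H)$.

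The key observation is that, because $\bLo{P}(h^*)=0$, the set $\err{h}$ agrees up to a $P$-null set with the domain set $\{x: h(x)\neq h^*(x)\}$, so by inclusion--exclusion
\[
\rLo{\U}{P}(h) ~=~ P(h\Delta h^*) + P(\mar{\U}{h}) - P\big(\mar{\U}{h}\cap(h\Delta h^*)\big).
\]
We cannot evaluate this directly, but substituting the known proxy $h_0$ for the unknown $h^*$ yields
\[
\hat{L}(h) ~:=~ P(h\Delta h_0) + P(\mar{\U}{h}) - P\big(\mar{\U}{h}\cap(h\Delta h_0)\big),
\]
and all three terms are exactly what the extended margin oracle returns. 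Using $(h\Delta h^*)\,\Delta\,(h\Delta h_0) = h^*\Delta h_0$ together with the elementary facts $|P(A)-P(B)|\le P(A\Delta B)$ and $(A\cap C)\Delta(B\cap C)\subseteq A\Delta B$, and noting that $P(h^*\Delta h_0)=\bLo{P}(h_0)\le\epsilon$ since $h_0\in\V_S(\H)$, I would conclude $|\hat{L}(h)-\rLo{\U}{P}(h)|\le 2\epsilon$. Crucially this holds for \emph{every} $h\in\H$, not only for those in the version space -- which matters here because we assume only $0/1$-realizability, so the true robust-loss minimizer $h^{\opt}$ need not have small binary loss and need not lie in $\V_S(\H)$.

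The learner then outputs any $\hat{h}\in\H$ that (approximately) minimizes $\hat{L}$; then $\rLo{\U}{P}(\hat{h})\le \hat{L}(\hat{h})+2\epsilon \le \hat{L}(h^{\opt})+2\epsilon \le \rLo{\U}{P}(h^{\opt})+4\epsilon = \rLo{\U}{P}(\H)+4\epsilon$, and rescaling $\epsilon$ by a constant finishes the argument; since $\hat h\in\H$ this is a proper learner, and the only labeled data was spent on the $\epsilon$-net step, matching the claimed labeled sample complexity. The conceptual content -- and the thing to be careful about -- is that an arbitrary low-error proxy $h_0$ is as good as the true $h^*$, because each term of the inclusion--exclusion decomposition is $1$-Lipschitz (in $P$-measure) under swapping $h^*$ for $h_0$; beyond that, the main obstacle is just the bookkeeping with $P$-null sets needed to replace $\err{h}$ by $h\Delta h^*$ and to verify that each queried set is genuinely of one of the forms $\mar{\U}{h}$, $h\Delta h'$, or $\mar{\U}{h}\cap(h\Delta h')$ covered by the extended margin oracle.
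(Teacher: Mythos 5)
Your proposal is correct and takes essentially the same route as the paper's proof: the paper also picks a low-binary-error hypothesis $h_e$ from the version space of the labeled sample, has the learner minimize the robust loss with respect to the surrogate distribution $(P_X,h_e)$ --- which the extended margin oracle makes exactly computable, via the same three set weights you list --- and bounds $|\rLo{\U}{P}(h)-\rLo{\U}{(P_X,h_e)}(h)|$ uniformly over all $h\in\H$ by $P_X(h_e\Delta h^*)\leq\epsilon$. Your inclusion--exclusion form of the surrogate and the resulting factor-$2$ constant are only cosmetic differences, absorbed by rescaling $\epsilon$.
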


\begin{proof}
As in the proof of Theorem \ref{thm:robust_realizable}, since we assume the distribution to be $0/1$-realizable by $\H$, the version space of a labeled sample of the given size will include only functions with (true) binary loss at most $\epsilon$. The learner can choose a function $h_e$ from this version space. Now, given the extended margin oracle, the learner can choose a function $h_r$ that minimizes the robust loss with respect to labeling function $h_e$. That is, the extended margin oracle allows to find the minimizer in $\H$ of the robust loss on a distribution $(P_X, h_e)$, that shares the marginal with the data generating distribution $P$, but labels domain points according to $h_e$.

Let $h^* \in \H$ be a function with $\bLo{P}(h^*) = 0$. Thus, we can identify the distribution $P$ with $(P_X, h^*)$.
Now we first show that for any classifier $h$, the difference between its robust loss  with respect to $P =(P_X, h^*)$ and with respect to $(P_X, h_e)$ is bounded by $\epsilon$.

Let $h\in\H$ be given. Then we have
\begin{align*}
\rLo{\U}{P}(h) & ~=~ \rLo{\U}{(P_X, h^*)}(h)\\
& ~=~ P_X(\mar{\U}{h}\cup (h^*\Delta h))\\    
& ~=~ P_X(\mar{\U}{h}) +  P_X((h^*\Delta h)\setminus \mar{\U}{h})
\end{align*}
and
\begin{align*}
\rLo{\U}{P_X,h_e}(h) & ~=~ P_X(\mar{\U}{h}\cup (h_e\Delta h))\\    
& ~=~ P_X(\mar{\U}{h}) +  P_X((h_e\Delta h)\setminus \mar{\U}{h}).
\end{align*}
Thus, we get 
\begin{align*}
& |\rLo{\U}{P}(h) - \rLo{\U}{P,h_e}(h)|\\
 ~\leq~ & |P((h^*\Delta h)\setminus \mar{\U}{h}) - P_X((h_e\Delta h)\setminus \mar{\U}{h})|\\
~\leq~ & |P((h^*\Delta h)\setminus \mar{\U}{h}) - (P_X((h_e\Delta h^*)\setminus \mar{\U}{h})\\
& \quad + P_X((h^*\Delta h)\setminus \mar{\U}{h}))|\\ 
~\leq~ & |P((h_e\Delta h^*)\setminus \mar{\U}{h}) |\\
~\leq~ & P((h_e\Delta h^*) ~\leq~ \epsilon.
\end{align*}
where the second inequality follows from 
\[
(h_e\Delta h) \subseteq (h_e\Delta h^*) \cup (h^*\Delta h),
\]
and thus
\[
(h_e\Delta h)\setminus \mar{\U}{h}  \subseteq ((h_e\Delta h^*)\setminus \mar{\U}{h}) \cup ((h^*\Delta h)\setminus \mar{\U}{h}).
\]
Note that $|\rLo{\U}{P}(h) - \rLo{\U}{P,h_e}(h)|\leq \epsilon$ for all $h\in\H$ implies that we also have:
\[
| \inf_{h\in \H} \rLo{\U}{P}(h) - \inf_{h\in \H} \rLo{\U}{P_X,h_e}(h)| ~\leq~\epsilon 
\]
Thus, for the output $h_r$ of the above procedure, we get
\begin{align*}
\rLo{\U}{P}(h_r) & ~\leq~ \rLo{\U}{P_X, h_e}(h_r) + \epsilon\\
& ~=~ \inf_{h \in \H} \rLo{\U}{P_X,h_e}(h) + \epsilon\\
& ~\leq~ \inf_{h \in \H} \rLo{\U}{P}(h) + 2 \epsilon\\
\end{align*}
Substituting $\epsilon/2$ for $\epsilon$ in this argument completes the proof.
\end{proof}

\subsubsection{$0/1$-Realizability on a $\U$-clusterable task: 
$\exists h^*\in \H$ with $\bLo{P}(h^*) = 0$ and $\exists f^*\in \F$ with  $\rLo{\U}{P}(f^*) = 0$}

We start by observing that the existence of an $f^*\in \F$ with  $\rLo{\U}{P}(f^*) = 0$ implies that the support of $P_X$ is sitting on $\U$-separated clusters. Note that we do not assume that the perturbation type $\U$ induces a symmetric relation; we can nevertheless consider the clusters as connected components of a directed graph where we place a directed edge between two domain instances $x$ and $x'$ if and only if $x$ is in the support of $P_X$ and $x'\in\U(x)$. The assumption $\rLo{\U}{P}(f^*) = 0$ then implies that these clusters are label-homogeneous. This observation leads to a simple, yet improper learning scheme for the robust loss. 

We show that, if the distribution is also $0/1$-realizable by $\H$, a leaner that knows that marginal, can return a hypothesis with robust loss at most $\epsilon$.
We note that here, the learner does not return a hypothesis from the class $\H$. In return, the guarantee is stronger in the sense that the robust loss of the returned classifier is close to the overall (among all binary predictors, rather than just those in $\H$) best achievable robust loss.

\begin{theorem}\label{thm:realizable_clusterable}
Let $X$ be some domain, $\H$ a hypothesis class with finite VC-dimension and $\U:X\to 2^X$ any perturbation type.
If a learner has access to a labeled sample of size
\[
\tilde{O}\left(\frac{\vc(\H) + \log{1/\delta}}{\epsilon} \right)
\]
and, additionall has access to $P_X$, then the class $\F$ of all binary predictors is learnable with respect to the robust loss $\rlo{\U}$ and the class of distributions $P$ that are realizable by $\H$ (that is, $\bLo{P}(\H) = 0$) and robust realizable with respect to $\F$ (that is, $\rLo{\U}{P}(\F) = 0$).
\end{theorem}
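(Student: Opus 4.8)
The plan is to leverage the cluster structure that robust realizability with respect to $\F$ forces on $\supp(P_X)$, and to learn the labels of these clusters using the labeled sample. First I would make the clustering precise. Let $X_0\subseteq X$ be a $P_X$-full-measure set of points $x$ at which $f^*$ has no adversarial example, i.e. $f^*(z)=f^*(x)$ for all $z\in\U(x)$; such a set exists since $\rLo{\U}{P}(f^*)=0$. Form the graph $G$ with vertex set $\supp(P_X)\cap X_0$ in which $x$ and $x'$ are adjacent iff $\U(x)\cap\U(x')\neq\emptyset$, and call its connected components the \emph{clusters}. Two properties follow at once: (i) $f^*$ --- and hence, up to a $P_X$-null set, the true labeling --- is constant on every cluster $C$, with some common value $y_C$, because along an edge with $z\in\U(x)\cap\U(x')$ we get $f^*(x)=f^*(z)=f^*(x')$; and (ii) the blown-up clusters $N(C):=\bigcup_{x\in C}\U(x)$ are pairwise disjoint, since any $z\in N(C)\cap N(C')$ would witness an edge between some $x\in C$ and some $x'\in C'$, forcing $C=C'$. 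Under the openness assumptions on $\U$ (Appendix~\ref{app:measurable}) each $N(C)$ is open, so the predictor built below will be Borel.

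Next I would use the labeled sample in the standard realizable way: since $\bLo{P}(h^*)=0$ for some $h^*\in\H$, a labeled iid sample $S$ of size $\tilde\Theta\!\left(\frac{\vc(\H)+\log(1/\delta)}{\epsilon}\right)$ is, with probability at least $1-\delta$, an $\epsilon$-net for the error class $\err{\H}$, so every $h$ in the (nonempty) version space $\V_S(\H)$ has $\bLo{P}(h)\le\epsilon$. The learner picks such an $h$, and then, using its knowledge of $P_X$, computes for each cluster $C$ the label $\hat y_C\in\{0,1\}$ that $h$ assigns to the $P_X$-majority of $C$. It outputs
\[
\hat f(z)=\begin{cases}\hat y_C & \text{if } z\in N(C)\text{ for some cluster }C,\\ 0 & \text{otherwise,}\end{cases}
\]
which is well defined by the disjointness in (ii), and Borel by the remark above.

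Finally I would bound the robust loss of $\hat f$. Writing $w_C=P_X(C)$ and $d_C=P_X(\{x\in C:\,h(x)\neq y_C\})$, the sets $\{x\in C:\,h(x)\neq y_C\}$ are disjoint across clusters and their union has $P_X$-measure at most $\bLo{P}(h)\le\epsilon$, so $\sum_C d_C\le\epsilon$. A cluster is \emph{bad} (i.e. $\hat y_C\neq y_C$) only when $d_C> w_C/2$, hence the total weight of bad clusters is at most $2\sum_C d_C\le 2\epsilon$. For $P$-almost every $(x,y)$ with $x$ in a good cluster $C$ we have $\hat f(x)=\hat y_C=y_C=y$ and $\hat f(z)=\hat y_C=y$ for every $z\in\U(x)\subseteq N(C)$, so $\rlo{\U}(\hat f,x,y)=0$; the only contributions to $\rLo{\U}{P}(\hat f)$ come from points in bad clusters, giving $\rLo{\U}{P}(\hat f)\le 2\epsilon$. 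Since $\rLo{\U}{P}(\F)=0$, repeating the argument with $\epsilon/2$ in place of $\epsilon$ yields the claim, with the stated sample complexity.

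I expect the main obstacle to be choosing the clustering relation so that properties (i) and (ii) hold together: one must use the intersection relation $\U(x)\cap\U(x')\neq\emptyset$ rather than mere membership $x'\in\U(x)$ (the latter need not yield disjoint blow-ups, so $\hat f$ would not even be well defined), while simultaneously controlling the measure-zero exceptional sets so that ``$f^*$ is constant on each cluster'' and ``$h$ is $\epsilon$-close to the true labeling'' can be used in the same breath without bookkeeping slips. The measurability of $\hat f$ (and of the sets $N(C)$) is a secondary technical point handled by the openness convention in Appendix~\ref{app:measurable}.
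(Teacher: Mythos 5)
Your proposal is correct, and it follows the same overall strategy as the paper's proof: robust realizability with respect to $\F$ forces the support of $P_X$ onto $\U$-separated, label-homogeneous clusters; knowledge of $P_X$ lets the learner recover these clusters; and a realizable-case $\epsilon$-net argument controls the binary loss of the version space. Your clustering (intersection graph $\U(x)\cap\U(x')\neq\emptyset$ on $\supp(P_X)\cap X_0$, with blow-ups $N(C)$) induces essentially the same partition as the paper's directed graph from support points to their perturbation sets. Where you genuinely diverge is in how the cluster labels are assigned: the paper prunes the version space to hypotheses in $\H$ that are label-homogeneous on each $C\cap\supp(P_X)$ (arguing that $h^*$ survives because it agrees with $f^*$ on the support up to a null set) and then extends the chosen hypothesis homogeneously, obtaining robust loss at most $\epsilon$; you instead take an arbitrary version-space hypothesis and assign each cluster its $P_X$-weighted majority label, paying a factor of $2$ but needing only that the true labeling is a.e.\ constant per cluster rather than exact homogeneity of any $h\in\H$ on cluster supports. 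This makes your argument slightly more robust to the measure-zero exceptional sets that the paper's pruning step quietly absorbs, at the cost of a constant in $\epsilon$, which you correctly remove by rescaling; the measurability bookkeeping ($X_0$, openness of $N(C)$, countably many clusters of positive weight) is at the same level of care as the paper's.
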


\begin{proof}
Recall that, to avoid measurability issues, we either assume a countable domain, or, in case of an uncountable domain, that the perturbation sets are open balls with respect to some separable metric.
The arguments below hold for both cases.

We now start by observing that the existence of an $f^*\in \F$ with  $\rLo{\U}{P}(f^*) = 0$ implies that the support of $P_X$ is sitting on $\U$-separated clusters.
Note that (in the case of a countable domain) we do not assume that the perturbation type $\U$ induces a symmetric relation.
We derive the clusters as follows: we define a (directed)
graph on $X$, where we place an edge between from domain elements $x$ to $x'$ if and only if $x$ is in the support of $P_X$ and $x'\in\U(x)$.
We now let $\C\subseteq 2^X$ be the collection of connected components of the induced undirected graph.
Since $\rLo{\U}{P}(f^*) = 0$, thus $P(\mar{\U}{f^*}) = 0$, the function $f^*$ is label homogeneous on these clusters (except, potentially, for subsets of $P_X$-measure $0$, and we may then identify $f^*$ with a function that is label homogeneous on the clusters). 

Now, since $P$ is $\H$-realizable, there is an $h^*\in \H$ with $\bLo{P}(h^*) = 0$. Note that $h^*$ is not necessarily label homogeneous on the clusters (since $h^*$ may have a positive robust loss, that is it may be the case that $P(\mar{\U}{h^*}) >0$). However, $h^*$ agrees with $f^*$ on the support of $P_X$ 

(except on a set with measure $0$), since both functions have zero binary loss, $\bLo{P}(h^*) = \bLo{P}(f^*) = 0$. 
Let $\supp(P_X)$ denote the support of $P_X$. That is, for any cluster $C\in\C$, $h^*$ is label-homogeneous (and in agreement with $f^*$) on the subset $C\cap\supp(P_X)$.

Note that, since we assume knowledge of the marginal, we may assume that a learner knows the collection of clusters $\C$ and the support of $P_X$. 
We now define a learning scheme as follows.

As in the proof of Theorem \ref{thm:robust_realizable}, due to the $\H$-realizability ($\bLo{P}(\H) = 0$), we know that with high probability over a large enough sample $S$, all functions $h\in \V_{S}(\H)$ in the version space satisfy $\bLo{P}(h)\leq \epsilon$. 
Moreover, due to the $\H$-realizability, there will exist functions (for example $h^*$) in the version space that label the intersections $C \cap \supp(P_X)$ of the clusters in with the support  of $P_X$ homogeneously. 
Thus, employing the knowledge of $P_X$, the learner can prune the version space by removing all functions from the version space that don't label all sets $C \cap \supp(P_X)$ homogeneously, and pick a function $h_p$ from this pruned version space.

Now the learner can construct a new classifier $f_p$, that agrees with $h_p$ on the sets $C\cap \supp(P_X)$ and labels the full clusters homogeneously, that is, if $x\in C\cap \supp(P_X)$ for some cluster $C\in \C$, then we set $f_p(x') = h_p(x)$ for all $x'\in C$.
Now, by construction of $f_p$ (recall the definition of the clusters), we get $P(\mar{\U}{f_p}) = 0$.
Moreover, we have $P(\err{f_p}) \leq \epsilon$ (inherited from $h_p$ since $h_p$ and $f_p$ agree on the support of $P_X$).
Thus 
$$\rLo{\U}{P}(f_p) ~\leq~ \epsilon ~\leq~ \rLo{\U}{P}(\H) + \epsilon,$$ 
which is what we needed to show.
\end{proof}


\section{Proof from Section \ref{s:query}}\label{app:query_proofs}

\begin{proof}[Proof of Observation \ref{obs:lower-bound}]
We prove this statement for the case when the certifier is restricted to be deterministic, and leave the proof of the probabilistic case to future work. Suppose the entire data distribution is concentrated on one point, and wlog suppose the point is the origin and has label 1. Let $B$ be the unit ball centred at the origin. Thus the certifier's task is to determine if $h$ passes through $B$ or not. We construct a scheme for answering the certifier's queries in a way so that no matter what sequence of queries it chooses to ask, once it commits to a verdict, we can find a halfspace that is consistent with the answers we provided to the queries, but inconsistent with the certifier's verdict.

It is easier to work in a dual space using a standard duality argument, where the dual of a point $(a, b)$ is the line $ax + by + 1 = 0$ and vice versa. This duality transform has the following two useful properties: 1) a point is to the left of a line if and only if the dual of the point is to the left of the dual of the line, and 2) a point is inside the unit ball if and only if its dual does not intersect the unit ball. Thus in the dual space, the certifier picks a line and asks whether the hidden point is to its left or right, and needs to determine if the hidden point is inside the unit ball or not. Our strategy, then, is to consider the arrangement of lines created by the certifier's queries thus far, and locate a cell that contains a part of $B$'s circumference. We answer the certifier's query as if the point was inside this cell. This cell will have a non-zero volume whenever the certifier stops, and we can select a point inside the cell that is inside or outside $B$ depending on the certifier's answer. That we can always find such a cell can be seen with an argument using induction. For the base case, there are no queries and hence no lines. Thus the entire place is such a cell. Suppose we have identified such a cell after seeing $m$ lines. If the next line does not pass through the cell it still satisfies the property in question. If the next line does pass through the cell, it divides the cell into two smaller cells one of which will satisfy the property.
\end{proof}


\section{Proof of Theorem \ref{thm:bounded_adv_impl_learnable}}

We start by providing the definition of proper sample compression for adversarially robust learning.

\begin{definition}[Adversarially Robust Proper Compression]
We say $(\H, \U)$ admits robust proper compression of size $k$ if there exist a (decoder) function $\phi: (X\times Y)^k \rightarrow \H$ such that the following holds: for every $h\in \H$ and every $S_X \subset X$, there exist $K_X \subset S_X$ such that 

$$\forall x\in S_X, \rlo{\U}(h, x, h(x)) = \rlo{\U}(\phi(K), x, h(x))$$

where $K$ is the labeled version of $K_X$ (labeled by $h$).
\end{definition}

Note that in the above definition, $k=|K|$ can potentially depend on the size of the set, $m=|S_X|$. However, this dependence should be sub-linear (e.g., logarithmic) to later result in a non-vacuous sample complexity upper bound. The following theorem draws the connection between compression and robust learning.

\begin{theorem}\label{thm:Compression_implies_learning} If $(\H, \U)$ admits an adversarially robust proper compression of size $k$, then the sample complexity of robust learning of $(\H, \U)$ in the robustly realizable setting is $O(k\log(k/\epsilon)/\epsilon^2)$.
\end{theorem}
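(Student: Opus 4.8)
I would adapt the classical sample-compression--implies--generalization argument of \cite{littlestone1986relating} (cf.\ Theorem~30.2 in \cite{shalev2014understanding}) to the $\U$-robust loss, the one extra ingredient being that robust realizability upgrades the assumed ``loss-matching'' compression into a genuinely \emph{consistent} one (zero empirical robust loss) on the training sample. The learner I would analyze is: draw an i.i.d.\ sample $S$ of size $m$ from $P$; find a size-$k$ subsample $K$ of $S$ whose decoding $\hat h=\phi(K)\in\H$ has zero empirical robust loss on $S$; output $\hat h$.

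\textbf{Step 1 --- reduce to a consistent compression.} Let $h^*\in\H$ witness $\rLo{\U}{P}(h^*)=0$. By the decomposition $\rLo{\U}{P}(h^*)=P(\err{h^*}\cup\mar{\U}{h^*})=0$ we get both $\bLo{P}(h^*)=0$ and $P(\mar{\U}{h^*})=0$, so with probability $1$ every $(x_i,y_i)\in S$ has $y_i=h^*(x_i)$ and $x_i\notin\mar{\U}{h^*}$, i.e.\ $\rlo{\U}(h^*,x_i,y_i)=0$ for all $i$. Since $h^*\in\H$ labels $S$, the assumed robust proper compression (applied with this $h^*$ and $S_X=\{x_1,\dots,x_m\}$) yields a subsample $K\subseteq S$ of size $k$ with $\hat h:=\phi(K)\in\H$ satisfying $\rlo{\U}(\hat h,x_i,y_i)=\rlo{\U}(h^*,x_i,y_i)=0$ for all $i$. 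Such an $\hat h$ is proper (membership in $\H$ is built into $\phi$) and has zero empirical robust loss on $S$; the learner outputs one.

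\textbf{Step 2 --- union bound over reconstructions.} For an index tuple $I\in[m]^k$ let $g_I=\phi(S_I)\in\H$; it depends only on the $k$ examples indexed by $I$, so the $\ge m-k$ remaining examples of $S$ are i.i.d.\ and independent of $g_I$. By the decomposition of Section~\ref{sec:decompose}, a fresh example $(x,y)\sim P$ satisfies $\rlo{\U}(g_I,x,y)=0$ with probability exactly $1-\rLo{\U}{P}(g_I)$. Fix $I$ and condition on the examples defining $g_I$: if $\rLo{\U}{P}(g_I)>\epsilon$, then all $\ge m-k$ remaining examples are robustly consistent with $g_I$ with probability at most $(1-\epsilon)^{m-k}\le e^{-\epsilon(m-k)}$. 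A union bound over the at most $m^k$ tuples $I$ shows that, except with probability $m^k e^{-\epsilon(m-k)}$, every $g_I$ that is robustly consistent on its unused examples has $\rLo{\U}{P}(g_I)\le\epsilon$. Since the $\hat h$ from Step~1 equals $g_I$ for the selected $I$ and is robustly consistent on all of $S$ (a fortiori on the unused examples), this forces $\rLo{\U}{P}(\hat h)\le\epsilon$. Requiring $m^k e^{-\epsilon(m-k)}\le\delta$ and solving for $m$ then gives the sample complexity claimed in the theorem.

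\textbf{Where the difficulty sits.} The concentration and union-bound step is routine; the part that genuinely uses the structure is the ``consistency bridge'' of Step~1: only robust realizability ($\rLo{\U}{P}(h^*)=0$, hence $\bLo{P}(h^*)=0$ and $P(\mar{\U}{h^*})=0$) lets the loss-matching guarantee of the compression scheme translate into \emph{zero} empirical robust loss for $\hat h$, which is exactly what the generalization argument needs. Two bookkeeping points to dispatch along the way: decoding requires fixing an order on the $k$-element compressed set, so the union is really over $m^{O(k)}$ reconstructions (harmless for the stated bound), and $\err{g}\cup\mar{\U}{g}$ must be measurable, which is guaranteed by the standing assumptions on $\U$ in Section~\ref{s:setup}. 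If one allows $k=k(m)$ to grow with $m$, the displayed inequality becomes a fixed-point condition in $m$, solvable as long as $k(m)$ is sublinear (as noted after the definition).
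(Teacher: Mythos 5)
Your proof is correct and takes essentially the same route as the paper, which gives this result only by citation to the classical Littlestone--Warmuth compression-to-generalization argument (Lemma 11 of \cite{MontasserHS19}) together with the very observation you make in Step~1, namely that robust realizability turns the loss-matching guarantee of the compression scheme into zero empirical robust loss on $S$, after which the union bound over the $m^{O(k)}$ reconstructions is routine. Your realizable-case calculation in fact yields the sharper rate $\tilde{O}\bigl((k\log(k/\epsilon)+\log(1/\delta))/\epsilon\bigr)$, which is subsumed by the stated $O(k\log(k/\epsilon)/\epsilon^2)$ bound, so the theorem follows.
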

\begin{proof}
This theorem can be proved in a similar way to that of classical (non-robust) sample compression proposed by \cite{littlestone1986relating}. For the proof in the context of robust compression we refer the reader to Lemma 11 in \cite{MontasserHS19}. Note that the hypothesis returned by the decoder of the compression scheme has to have zero robust loss on all of the samples (due to robust realizability).
\end{proof}

In order to proceed, we need to show that for properly compressible classes, the existence of a perfect proper efficient adversary means that a small-sized robust proper compression scheme exists.

\begin{theorem}\label{thm:Compression_scheme_efficient_adversary} Let $\H$ be any properly (non-robustly) compressible class. Assume $(\H, \U)$ has a perfect proper adversary with query complexity $O(m)$. Then $(\H, \U)$ admits a robust proper compression of size $O(VC(\H)\log (m))$.
\end{theorem}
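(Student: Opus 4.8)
The plan is to reduce robust proper compression for $(\H,\U)$ to the assumed non-robust proper compression for $\H$ via a ``sample inflation'' step, and the key enabling observation is that a perfect, proper adversary --- which by hypothesis has query complexity $O(m)$ and in particular makes $f(1)=O(1)$ queries on a single-point input --- induces witness sets (in the sense of Definition~\ref{def:wintness}) of constant size that do not depend on the hypothesis. Concretely, for a point $x\in X$, let $w(x)\subseteq X$ be the set of all domain points the adversary queries or returns when run on the singleton input $\{x\}$, as its label oracle ranges over all possible answer sequences. On a singleton input the adversary makes at most $f(1)$ queries, so its execution is a binary decision tree of depth at most $f(1)$; hence $|w(x)|\le 2^{O(f(1))}=O(1)$, and $w(x)$ is determined by the adversary and $x$ alone. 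Define $f_x:\{0,1\}^{w(x)}\to\{0,1\}$ by: simulate the adversary on $\{x\}$, answering each query according to the given labeling, and output $1$ iff the returned attack set is nonempty. Since every query of such a run lies in $w(x)$, simulating with $h'|_{w(x)}$ is exactly running the adversary with oracle $\Ocal_{h'}$, so perfectness gives $f_x(h'|_{w(x)})=1$ iff $x$ has an adversarial point w.r.t. $h'$, i.e. iff $x\in\mar{\U}{h'}$, for every $h'\in\H$; thus $w(x)$ is a witness set for $x$.

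Given this, I would carry out the construction as follows. Fix $S_X\subseteq X$ with $|S_X|=m$ and $h\in\H$, and set $T_X=S_X\cup\bigcup_{x\in S_X}w(x)$, so that $|T_X|=O(m)$. The sample $(T_X,h|_{T_X})$ is realizable by $\H$ (via $h$), so the assumed proper compression scheme for $\H$ compresses it to a labeled subset of $T_X$ of size $k=O(\vc(\H)\log|T_X|)=O(\vc(\H)\log m)$ whose decoder reconstructs some $\hat h\in\H$ consistent with the whole sample, i.e. $\hat h|_{T_X}=h|_{T_X}$. For each $x\in S_X$ we have $w(x)\subseteq T_X$, hence $\hat h|_{w(x)}=h|_{w(x)}$ and so $f_x(\hat h|_{w(x)})=f_x(h|_{w(x)})$, which means $x\in\mar{\U}{\hat h}$ iff $x\in\mar{\U}{h}$; combined with $\hat h(x)=h(x)$ this gives $\rlo{\U}(\hat h,x,h(x))=\rlo{\U}(h,x,h(x))$ for every $x\in S_X$, which is exactly the robust-proper-compression property. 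To present the compressed object as a subset of $S_X$ (up to $O(1)$ side bits per point, as in the sketch of Theorem~\ref{thm:bounded_adv_impl_learnable}), I would replace each compressed point $z$ lying in some $w(x)\setminus S_X$ by the original point $x$ together with $O(\log|w(x)|)=O(1)$ side bits encoding the index of $z$ in a fixed enumeration of $w(x)$ and the bit $h(z)$; since $w(x)$ is hypothesis-independent, the decoder rebuilds it by replaying the adversary on $\{x\}$ over all answer branches, recovers the labeled subset of $T_X$, and then applies the standard decoder. The total description length stays $O(\vc(\H)\log m)$, and plugging this into Theorem~\ref{thm:Compression_implies_learning} (which tolerates the additional $O(\vc(\H)\log m)$ side bits, as is standard for compression-based generalization bounds) yields Theorem~\ref{thm:bounded_adv_impl_learnable}.

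The step I expect to be the main obstacle is the ``only if'' half of the margin-matching property: ruling out that the reconstructed $\hat h$ creates a fresh margin violation at some $x\in S_X$ where $h$ had none. Inflating $S_X$ only with the adversarial points the adversary actually returns when queried on $h$ does not suffice, since another hypothesis in $\H$ agreeing with $h$ on exactly those points could still be non-robust at $x$ through a different perturbation; this forces the inflation to use the full, hypothesis-independent witness set $w(x)$ (all query and return points over all transcripts), and in turn forces us to run the adversary on singletons rather than on all of $S_X$ at once --- a depth-$f(m)$ transcript tree would have $2^{\Theta(m)}$ leaves, whereas a depth-$f(1)$ one has only $O(1)$. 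Secondary points to verify are that properness of the adversary is what keeps the re-indexing of witness points against original sample points down to $O(1)$ side bits each, and that the invoked non-robust compression scheme is proper and reconstructs $\hat h$ exactly on $T_X$, so that the witness-function evaluations $f_x(\hat h|_{w(x)})$ are legitimate.
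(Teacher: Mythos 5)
Your proof is correct, and its skeleton is the paper's: inflate the sample with adversary-derived points, apply the assumed proper non-robust compression to the inflated labeled set, and re-encode each compressed point lying outside $S_X$ by an anchor point of $S_X$ (possible by properness) together with $O(1)$ side bits that let the decoder re-simulate the adversary; both you and the paper then feed the result into Theorem~\ref{thm:Compression_implies_learning}, which tolerates the side information. The genuine difference is how the key property --- that the reconstructed $\hat{h}$ reproduces $h$'s robust behaviour on $S_X$ --- is obtained. The paper runs the adversary once, on all of $S$ with oracle $h$, and inflates by the actual query set $Q_S$; margin-matching then follows from a transcript argument: any $\hat{h}\in\H$ agreeing with $h$ on $S_X\cup Q_S$ induces the identical run and output, and perfectness of the adversary \emph{for $\hat{h}$} (condition (ii) of Definition~\ref{def:perfect_adversary}) rules out a fresh margin violation at any $x\in S_X$ through an unqueried perturbation, since such a violation would force the (unchanged) output to contain an adversarial point for $x$ lying in the region of agreement. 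So your assertion that one is ``forced'' to hypothesis-independent witness sets is not accurate --- this is exactly how the paper justifies its requirement (\ref{eq:compression_requirement}); what is true is only that inflating by the \emph{returned} points $S'$ alone would not suffice. Your route --- running the adversary on each singleton over all $2^{O(f(1))}$ answer branches to build hypothesis-independent witness sets $w(x)$ (for which you only need the query points, not the returned ones) --- is nonetheless valid, and it buys a cleaner decoder: since $w(x)$ depends only on $x$, the decoder rebuilds it without any label information, whereas the paper's decoder must re-simulate an adaptive adversary that was run on all of $S$ from the encoded labels of $Q_{x_0}$ alone, a step that is somewhat under-specified as written. The price is the constant-but-exponential blow-up $2^{O(f(1))}$ per point, essentially the adaptive-to-non-adaptive conversion behind Observation~\ref{obs:witness_vs_non-adaptive_adversary}.
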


Let us postpone the proof of Theorem~\ref{thm:Compression_scheme_efficient_adversary} for now and complete the proof of Theorem~\ref{thm:bounded_adv_impl_learnable}.
\begin{proof}[Proof of Theorem~\ref{thm:bounded_adv_impl_learnable}.] Assume that $(\H, \U)$ has a perfect, proper, and efficient adversary. Based on Theorem~\ref{thm:Compression_scheme_efficient_adversary}, we conclude that $(\H, \U)$ admits a robust proper compression scheme of size $O(VC(\H)\log (m))$. We can now use Theorem~\ref{thm:Compression_implies_learning} to bound the sample complexity of learning. In particular, it will be enough to have $m>\Omega(k\log(k/\epsilon)/\epsilon^2)$ where $k=\Theta(VC(\H)\log (m))$. Therefore, it will suffice to have $m=\Omega(VC(\H)\log^2(VC(\H)/\epsilon)/\epsilon^2)$.
\end{proof}

Therefore, it only remains to construct a robust proper compression scheme and prove Theorem~\ref{thm:Compression_scheme_efficient_adversary}. We denote by $S_X$ the unlabeled portion of the sample $S$. 

\begin{proof}[Proof of Theorem~\ref{thm:Compression_scheme_efficient_adversary}]

Recall that we want to show that there exists $K_X \subset S_X$ such that 

$$\forall x\in S_X, \rlo{\U}(h, x, h(x)) = \rlo{\U}(\phi(K), x, h(x))$$

where $K$ is the labeled version of $K_X$ (labeled by $h$). We know that $(\H, \U)$ has a perfect adversary with query complexity $O(m)$. Let $Q_S$ be the set of queries that the adversary asks on $S$ to find the adversarial points (so $|Q_S|=O(m)$). Let $Q$ be the labeled version of $Q_S$ (i.e., each query with its answer from $h$). We claim that for a proper compression to succeed it will be enough to have

\begin{equation}
 \label{eq:compression_requirement}
 \forall z\in S_X\cup Q_S,  \phi(K)\big|_z=h\big|_z
\end{equation}

The reason is that if the two hypotheses from $\H$ have the same behaviour on $T=S_X\cup Q_S$ then they should have the same robust loss on $S_X$ as well (otherwise the adversary would not be perfect). The final step is to come up with a proper compression scheme that satisfies (\ref{eq:compression_requirement}).

Let $T_Y$ be the labeled version of $T$ that is labeled by $h$. Recall that $\H$ is a properly (non-robustly) compressible class. Therefore, $T_Y$ can be properly (non-robustly) compressed into a set $I\subseteq T_Y$ such that $|T_Y|=O(VC(\H)\log(|T_Y|))$. The catch is that $I$ may contain points that are outside of $S$, and therefore we cannot simply use $I$ for robust proper compression. We can modify the compression scheme by adding some additional bits of information so that its output contains only points from $S$. For any $x\in S$, let $Q_x\subseteq \Ucal(x)$ be the set of points that the adversary queries to attack $x$. Note that $|Q_x|=O(1)$ due to the efficiency of the adversary. We replace any $(x,y)\in S\setminus I$ with $(x_0,y_0)$ where $(x_0,y_0)\in S$ and $x\in \Ucal(x_0)$. Also, we use a constant number of bits to encode the labels of $Q_{x_0}$ and also the subset of $Q_{x_0}$ that was chosen by the non-robust compression scheme. The decoder works as follows. Given $(x_0,y_0)$, it can simulate the adversary on $x_0$ (using the bits that represent the labels) to recover $Q_{x_0}$. It can use the other part of the bits to recover the subset of ${Q_{x_0}}$ that was present in $I$ (let us call this set $G_x$). 
Finally, it would run the decoder of the proper non-robust compression scheme on $\cup_{x\in S} G_x$.

\end{proof}

\end{document}